\theoremstyle{plain}
\newtheorem{theorem}{Theorem}[section]
\theoremstyle{definition}
\newtheorem{definition}[theorem]{Definition}
\theoremstyle{remark}
\author{
    Steven Morad$^{1,2}$, Chris Lu$^{3}$, Ryan Kortvelesy$^{2}$, Stephan Liwicki$^{4}$, Jakob Foerster$^{3}$, \\\textbf{Amanda Prorok}$^{2}$ \\
    $^1$Faculty of Science and Technology, University of Macau, China \\
    $^2$Computer Science and Technology, University of Cambridge, UK \\
    $^3$Engineering Science, University of Oxford, UK\\
    $^4$Toshiba Europe, UK \\
    \texttt{smorad@um.edu.mo, christopher.lu@exeter.ox.ac.uk, rk627@cst.cam.ac.uk}, \\
    \texttt{Stephan.Liwicki@toshiba.eu, jakob.foerster@eng.ox.ac.uk, asp45@cam.ac.uk}
}
\begin{document}

%\icmltitle{Revisiting Recurrent Reinforcement Learning}
%\icmltitle{Revisiting Reinforcement Learning in the Age of Parallel Recurrence}
%\icmltitle{Revisiting Recurrent Reinforcement Learning with Efficient Sequence Models}
%\icmltitle{Memory Monoids: Revisiting Recurrent Reinforcement Learning}
%\icmltitle{Efficient Reinforcement Learning in the Age of Parallel Recurrence}
%\icmltitle{Efficient Reinforcement Learning with Memory Monoids}
\title{Recurrent Reinforcement Learning with Memoroids}
\maketitle

\begin{comment}
    
\begin{icmlauthorlist}
\icmlauthor{Steven Morad}{cam}
\icmlauthor{Chris Lu}{oxf}
\icmlauthor{Ryan Kortvelesy}{cam}
\icmlauthor{Stephan Liwicki}{tos}
\icmlauthor{Jakob Foerster}{oxf}
\icmlauthor{Amanda Prorok}{cam}
\end{icmlauthorlist}

\icmlaffiliation{cam}{Department of Computer Science and Technology, University of Cambridge}
\icmlaffiliation{oxf}{Department of Engineering Science, University of Oxford}
\icmlaffiliation{tos}{Toshiba Europe Ltd.}
\icmlcorrespondingauthor{Steven Morad}{sm2558@cam.ac.uk}

%\icmlcorrespondingauthor{Steven Morad}{sm2558@cam.ac.uk}
\icmlkeywords{Reinforcement learning, sequence models, linear transformer}
\vskip 0.3in
\end{comment}

\begin{abstract}
Memory models such as Recurrent Neural Networks (RNNs) and Transformers address Partially Observable Markov Decision Processes (POMDPs) by mapping trajectories to latent Markov states. Neither model scales particularly well to long sequences, especially compared to an emerging class of memory models called Linear Recurrent Models. We discover that the recurrent update of these models resembles a \emph{monoid}, leading us to reformulate existing models using a novel monoid-based framework that we call \emph{memoroids}. We revisit the traditional approach to batching in recurrent reinforcement learning, highlighting theoretical and empirical deficiencies. We leverage memoroids to propose a batching method that improves sample efficiency, increases the return, and simplifies the implementation of recurrent loss functions in reinforcement learning.
\end{abstract}
\section{Introduction}
Reinforcement learning (RL) traditionally focuses on solving Markov Decision Processes (MDPs), although for many interesting problems the Markov state is hidden. Instead, we receive noisy or ambiguous \emph{observations}, resulting in Partially Observable MDPs. The standard approach to RL under partial observability involves summarizing a sequence of observations into a latent Markov state using a \emph{memory model} or \emph{sequence model}. Commonly used models include RNNs and Transformers.

Training Transformers or RNNs over long sequences is computationally expensive. Instead, prior work often splits these sequences into shorter fixed-length subsequences called \emph{segments} (\cref{fig:segment_viz}). Using segments adds implementation complexity, reduces efficiency, and introduces theoretical issues. Despite these drawbacks, most prior work and virtually all existing RL libraries follow this segment-based approach. A new class of sequence models, sometimes called Linear Recurrent Models, offers much greater efficiency over long sequences than Transformers or RNNs. We posit that we can utilize these efficient models to do away with segments and their associated drawbacks.
%We find that we can model the recurrent update of many such models as a \emph{monoid}, a concept from category theory. In this paper, we extend the monoid into a \emph{memoroid}, a unifying framework that represents a large class of efficient recurrent models.
% OPTION 2
% To describe and generalize the key properties of these of efficient models, we introduce the concept of \emph{memoroids}, which rigorously defines a large class of recurrent operators that can be parallelized across the sequence dimension.
\paragraph{Contributions}
We aim to remove the need for segments in RL. First, we discover that many efficient memory models share an underlying structure reminiscent of \emph{monoids}, a concept from category theory. We propose to extend the monoid into a \emph{memoroid}, a mathematical framework which can represent a large class of efficient memory models. Armed with the memoroid, we propose a new batching method that eliminates the need for segments. We
\begin{itemize}[leftmargin=8pt]
    \setlength\itemsep{-0.1em}
    \item Derive memoroids for existing sequence models, as well as the discounted return and advantage
    \item Introduce a method for inline resets, enabling any memoroid to efficiently process multiple episodes
    \item Demonstrate that using segments degrades recurrent value functions
    \item Propose a new memoroid-based batching method that eliminates the need for segments
    \item Use this batching method to improve sample efficiency and simplify recurrent RL loss functions
\end{itemize}
\begin{figure}[t]
    \centering
    \scalebox{1.0}{\begin{tikzpicture}[font=\ttfamily,
array/.style={matrix of nodes,nodes={draw, minimum size=7mm, minimum height=7mm},column sep=-\pgflinewidth, row sep=0.0mm, nodes in empty cells}]
%row 1/.style={nodes={draw=none, fill=none, minimum size=5mm}},
%row 1 column 1/.style={nodes={draw}}}]

%\matrix[array] (array) {
%0 & 0 & 0 & 0 & 0 & 1 & 1 & 1 & 1 & 2\\};
\matrix[array] (array) {
\node (array-1-1)[fill=green!30]{0}; & \node(array-1-2)[fill=green!30]{0}; & \node(array-1-3)[fill=green!30]{0}; & \node(array-1-4)[fill=green!30]{0}; & \node(array-1-5)[fill=green!30]{0}; & \node(array-1-6)[fill=blue!30]{1}; & \node(array-1-7)[fill=blue!30]{1}; & \node(array-1-8)[fill=blue!30]{1}; & \node(array-1-9)[fill=blue!30]{1}; & \node(array-1-10)[fill=red!30]{2};\\};

\node[draw, minimum size=4mm] at (array-1-1) (box) {};

\begin{scope}[on background layer]
\fill[green!10] (array-1-1.north west) rectangle (array-1-10.south east);
\end{scope}

\draw[<->]([yshift=-3mm]array-1-1.south west) -- node[below] {One Rollout} ([yshift=-3mm]array-1-10.south east);

%\draw (array-1-1.north)--++(90:3mm) node [above] (first) {Episode Index};
\node[above=0.5 of array-1-1.north] (index){Episode Index};
\draw[->] (index) -- (box.north);
%\draw (array-1-10.east)--++(0:3mm) node [right]{Observations};

\node[above=0.5 of array-1-8.north] (tuple) {$(o, a, r, o', d)$ Transition};
\draw[->] (tuple) -- (array-1-8.north);

%\node [align=center, anchor=south] at (array-1-1.north west|-first.south) (8) {Observation};
%\draw (8)--(box);

%\matrix[array, right=2.0 of array] (segments) {
%0 & 0 & 0 \\
%0 & 0 & \phantom{0} \\
%1 & 1 & 1 \\
%1 & \phantom{0} & \phantom{0} \\
%2 & \phantom{0} & \phantom{0} \\
%};
\matrix[array, right=2.0 of array] (segments) {
\node(segments-1-1)[fill=green!30]{0}; & \node(segments-1-2)[fill=green!30]{0}; & \node(segments-1-3)[fill=green!30]{0}; \\
\node(segments-2-1)[fill=green!30]{0}; & \node(segments-2-2)[fill=green!30]{0}; & \phantom{0} \\
\node(segments-3-1)[fill=blue!30]{1}; & \node(segments-3-2)[fill=blue!30]{1}; & \node(segments-3-3)[fill=blue!30]{1}; \\
\node(segments-4-1)[fill=blue!30]{1}; & \phantom{0} & \phantom{0} \\
\node(segments-5-1)[fill=red!30]{2}; & \phantom{0} & \phantom{0} \\
};

\draw[|-|]([yshift=-3mm]segments-5-1.south west) -- node[below] {Time Dim ($L$)} ([yshift=-3mm]segments-5-3.south east);
\draw[|-|]([xshift=3mm]segments-1-3.north east) -- node[above, rotate=270] (batch) {Batch Dim ($B$)} ([xshift=3mm]segments-5-3.south east);
\draw[->, thick] (array) -- (segments);

\end{tikzpicture}}
    \caption{We visualize the Segment-Based Batching approach often used in prior literature. A worker collects a rollout of episodes, denoted by color. Each episode is split and zero-padded to produce a batch of segments, each with a constant, user-specified segment length $L$. Episodes exceeding the specified length are broken into multiple segments, preventing backpropagation through time from reaching earlier segments. Segments contain zero padding, reducing efficiency, biasing normalization methods, and necessitating padding-aware recurrent loss functions.}
    \label{fig:segment_viz}
\end{figure}
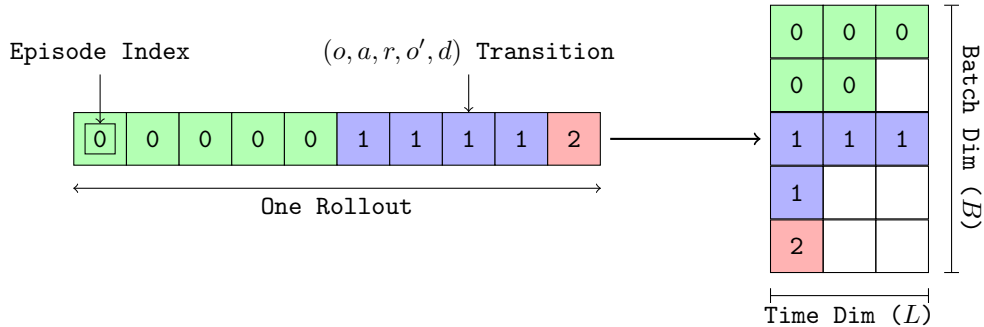

%In this work, we propose a unifying framework for efficient memory modeling, then propose an alternative batching method reliant on our framework. Our method improves sample efficiency across various tasks and memory models, while simplifying implementation.
\begin{comment}
\begin{enumerate}
    \item The \emph{memoroid}, a unifying framework for efficient sequence models
    \begin{itemize}
        \item We derive memoroids for existence sequence models, as well as the discounted return and advantage
        %\item Reformulate existing sequence models as memoroids
        %\item Derive memoroids for the discounted return and advantage, leveraging GPU parallelism
        %\item Prove that the discounted return and the advantage can be restructured as a memoroid, leveraging GPU parallelism
        \item We introduce a method for inline resets, enabling any memoroid to span multiple episodes%, enabling training over multiple contiguous episodes
    \end{itemize}
    \item An investigation of the impact of segments on RL. Specifically, we
    \begin{itemize}
        \item Find that sequence truncation and padding significantly degrade value estimators
        \item Leverage the properties of memoroids to devise a new batching method to replace the use of segments
        \item Show that our batching method improves sample efficiency across models and tasks, while also simplifying recurrent loss functions
    \end{itemize}
\end{enumerate}
\end{comment}
\section{Preliminaries}
Consider an MDP $(S, A, R, \mathcal{T}, \gamma)$, where at each timestep $t$, an agent produces a transition $T = (s, a, r, s')$ from interaction with the environment. We let $s, s' \in S$ denote the current and next states and state space, $a \in A$ denote the action and action space, $R: S \times A \times S \mapsto \mathbb{R}$ denote the reward function, and $\mathcal{T}: S \times A \mapsto \Delta S$ denote the state transition matrix ($\Delta$ denotes a distribution). In RL, our goal is to learn a policy parameterized by $\theta$ that maps states to action distributions $\pi_\theta: S \mapsto \Delta A$. The agent samples an action from the policy given the current state $a \sim \pi_\theta(s)$, and stochastically transitions to the next state $s' \sim \mathcal{T}(s, a)$, receiving a reward $r = R(s, a, s')$. The optimization objective is to find the parameters $\theta$ that maximize the expected return, discounted by $\gamma$: $\mathbb{E}_{\pi}[\sum_{t=0}^\infty \gamma^t R(s_t, a_t, s_{t+1}) ]$. %$\mathbb{E}[\sum_{t=0}^\infty \gamma^t R(s_t, a_t, s_{t+1}) \mid  \mathcal{T}(s_{t+1} | s_t, a_t), \pi(a_t | s_t)]$.

\subsection{Rollouts, Causality, and Episode Boundaries}
It is often practical to model terminal states in MDPs, such as a game over screen in a video game. In a terminal state, all actions lead back to the terminal state and the discounted return after entering the terminal state is always zero. We mark whether a state is terminal using the \emph{done flag} $d \in \{0, 1\}$. The done flag is stored in the transition $T = (s, a, r, s', d)$. Transitions are often used in loss functions to train the policy. However, while navigating the MDP we do not have access to the full transition -- just the state. We receive the done flag, reward, and next state $(r, d, s')$ at the \emph{next} timestep. This distinction between current and next timestep becomes important when we execute memoroids over multiple episodes.

%The MDP emits the reward and done flag corresponding to the current state, as well as the next state ($r, d, s'$) during the \emph{next} timestep.

We find that our paper is more clear if we introduce a \emph{begin flag} $b \in \{0, 1\}$ that is emitted alongside each observation, available during both training and rollouts. The begin flag is $1$ at the initial timestep of an episode and $0$ otherwise. We differentiate between a transition $T = (s, a, r, s', b, d)$ available only in hindsight, and a partial transition $\overline{T} = (s, b)$ as emitted while navigating the MDP. To reiterate, we can access $\overline{T}$ at any time, but we can only access $T$ during training.

%The partial transition enables us to remove any ambiguities with respect to episode boundaries during rollout or training.
%At each policy update, we interact with the environment to produce a \emph{rollout} of transitions $\rho = (T_1, T_2, \dots T_n)$. We may either train the policy on these transitions immediately, or store them in a replay buffer for later use.

\subsection{Partial Observability}
\label{sec:memory_def}
In partially observable settings, we cannot directly measure the Markov state $s$. Instead, we indirectly measure $s$ via the observation $o \sim \mathcal{O}(s)$, following the observation function $\mathcal{O}: S \to \Delta O$. With the observation replacing the state, interaction with the environment now produces a transition $P = (o, a, r, o', d, b)$ and partial transition $\overline{P} = (o, b)$. For certain tasks, the action from the previous timestep is also necessary, and is implicitly included in the observation.

A sequence of transitions starting where $b = 1$ and continuing until a terminal state is known as an episode $E$. We use a memory model $M$ to summarize the corresponding sequence of partial transitions into a latent Markov state.
\begin{align}
    M: \overline{P}^n \mapsto S^n.
    \label{eq:abstract_memory}
\end{align}
If $M$ is recurrent, we may alternatively write $M$ as a single update or batched update respectively
\begin{align}
    M: H \times \overline{P} \mapsto H \times S, &&
    M: H \times \overline{P}^n \mapsto H^n \times S^n,
    \label{eq:recurrent_memory_batch}
\end{align}
\begin{comment}
\begin{align}
    M: H \times \overline{P} \mapsto H \times S
    \label{eq:recurrent_memory} \\
    M: H \times \overline{P}^n \mapsto H^n \times S^n,
    \label{eq:recurrent_memory_batch}
\end{align}
\end{comment}

where $H$ is the set of recurrent states.

%Memory models process all partial transitions in the order which they occurred. Each episode $E$ often contains a variable number of transitions, making it difficult to store, batch, or efficiently train over more than one episode at a time.

\section{Background and Related Work}
%\subsection{Segment-Based Batching}
\label{sec:segment}
In deep learning, we often wish to train memory models over batches of sequences. For a single sequence, we can use Backpropagation Through Time (BPTT) \citep{werbos_backpropagation_1990}. If the sequences differ in length, it is not immediately clear how to efficiently combine them into a batch. \cite{williams_efficient_1990} propose Truncated BPTT (T-BPTT), which enables backpropagation over fixed-length sequences that we call \emph{segments}. T-BPTT is the defacto standard for training memory models in both supervised learning and RL \citep{hausknecht_deep_2015, kapturowski_recurrent_2019, hafner_mastering_2023, pmlr-v202-bauer23a, liang_rllib_2018, raffin_stable-baselines3_2021, huang_cleanrl_2021, serrano-munoz_skrl_2023, lu2022purejaxrl, ni_when_2024}.

%We refer to these fixed-length sequences as \emph{segments}.

%For RL under partial observability, a batch consists of multiple sequences (episodes). Episodes can vary in length, so how is one to effectively batch over variable-length episodes? The solution used in prior literature and virtually all RL libraries is the use of \emph{segments} \citep{hausknecht_deep_2015, kapturowski_recurrent_2019, hafner_mastering_2023, pmlr-v202-bauer23a, liang_rllib_2018, raffin_stable-baselines3_2021, huang_cleanrl_2021, serrano-munoz_skrl_2023, lu2022purejaxrl, ni_when_2024}.

In \emph{Segment-Based Batching} (SBB), we split and zero pad episodes so that they can be stacked into a tensor with batch and sequence length dimensions $B \times L$. Each row in this tensor is a segment $\sigma$ containing exactly $L$ transitions. Episodes longer than $L$ transitions will be split into multiple \emph{fragments}, such that each is at most $L$ transitions. Fragments shorter than $L$ transitions will be zero padded from the right, such that they become exactly length $L$. We call these padded length $L$ fragments \emph{segments}. We must also store a mask $m$ denoting which elements are zero-padding and which are data. The segments and masks are stacked along the batch dimension, creating $B \times L$ matrices for storage and training (\cref{fig:segment_viz}). We formally define SBB in \cref{sec:sbb_appendix}.

\paragraph{The Shortcomings of Segments}
\label{sec:shortcomings}
SBB introduces a number of shortcomings. (1) The zero padding and associated masks must be stored, taking up additional space. (2) The zero padding is fed to the memory model, wasting computation on zeros that are discarded during gradient computation. (3) The zero padding also prevents the use of BatchNorm \citep{ioffe_batch_2015} and other normalization methods by shifting the mean and variance of input data. (4) The extra time dimension and padding complicates RL loss functions. (5) Most importantly, when SBB splits episodes into distinct segments, we approximate the true BPTT gradient with the T-BPTT gradient.

Let us demonstrate the effect of SBB on the memory model gradient. Assume we have a loss function $\mathcal{L}$ defined over a model parameterized by $\theta$. We define the true gradient of the loss over an episode of length $n$ as $\nabla \mathcal{L}$. In SBB, we split an episode into length $L$ segments. We approximate the gradient over these segments as $\nabla_\sigma \mathcal{L}$
\begin{align}
    \nabla \mathcal{L} = \frac{\partial \mathcal{L}(\theta, (P_0, P_1, \dots P_{n-1}))}{\partial \theta}, &&
    \nabla_\sigma \mathcal{L}  = \sum_{j=0}^{\lceil n / L - 1 \rceil } \frac{\partial \mathcal{L}(\theta, (P_{jL}, \dots P_{\min((j+1)L - 1, n-1)}) )}{\partial \theta}.
\end{align}
Under SBB, we compute the gradient independently for each segment. The gradient across segment boundaries is therefore always zero. With zero gradient, it is unlikely that temporal dependencies greater than the segment length $L$ can be learned. In fact, our experiments show that $\nabla_\sigma$ is often a poor approximation of $\nabla$. %Prior work \citep{hausknecht_deep_2015, kapturowski_recurrent_2019, igl_deep_2018, ni_when_2024} assumes that $\nabla \approx \nabla_\sigma$, although we are not aware of any theoretical justification for this assumption, as the error between the true and approximated gradient is unbounded. In fact, our experiments show that $\nabla_\sigma$ is often a poor approximation of $\nabla$.

\paragraph{Alternatives to Segments}
We are not the first to realize the drawbacks of SBB. \cite{hausknecht_deep_2015} store recurrent states in the replay buffer, while \cite{kapturowski_recurrent_2019} replay the previous segment to generate a ``warm'' initial recurrent state for the current segment. These methods improve the return, highlighting issues with zero-initialized states, but do not fix the underlying gradient truncation issue. Real Time Recurrent Learning (RTRL) is an alternative to BPTT, but it has $O(n^4)$ time complexity and is thus much slower \citep{williams_experimental_1989}. \cite{irie_exploring_2024} propose a faster version of RTRL for RL, but the model must be at most one layer deep. Similar to our work, \cite{lu_structured_2024} avoids truncating backpropagation entirely. They find that this results in greater returns, but do not explore \emph{why} this occurs. Furthermore, their method is restricted to on-policy methods and the S5 memory model. Our method extends \cite{lu_structured_2024} to off-policy algorithms and a large majority of efficient memory models.

\subsection{On the Efficiency of Sequence Models}
\label{sec:linear_models}
SBB evolved alongside RNNs in RL \citep{hausknecht_deep_2015}, and Transformers to a lesser extent. Such models are only tractable when the sequence length $L$ is small. RNNs rely on the previous recurrent state to compute the following recurrent state, prohibiting parallelism over the time dimension. Thus, RNNs are unable to exploit the parallelism of modern GPUs over the time dimension. Transformers use pairwise attention on the sequence elements, scaling quadratically in space on the length of the sequence. 

A recent class of models espouse time-parallel execution while being either linear or subquadratic in space complexity. These models, such as State Space Models, Linear Transformers, Fast Weight Programmers, RetNet, RWKV, Linear Recurrent Units, Gated Impulse Linear Recurrent Networks, and Fast and Forgetful Memory \citep{gu_combining_2021,smith_simplified_2022,schlag_linear_2021,anonymous_retentive_2023,peng_rwkv_2023,orvieto_resurrecting_2023,martin_parallelizing_2018,morad_reinforcement_2023} are sometimes called \emph{Linear Recurrent Models} because they usually (but not always) employ a Linear Time-Invariant (LTI) recurrent state update, which can be computed in parallel over the time axis \citep{gu_mamba_2023}.

% unify a number of linear recurrent models as LTI systems, however, they do not provide a reset mechanism or generalize to non-linear recurrent updates.

\paragraph{Monoids}
Prior work on efficient sequence modeling primarily updates the recurrent state using linear functions \citep{schlag_linear_2021,gu_combining_2021,smith_simplified_2022,orvieto_resurrecting_2023}. However, works like \cite{blelloch_prex_1990,martin_parallelizing_2018,morad_reinforcement_2023} show that it is possible to create efficient models using nonlinear recurrent updates. The key to efficiency is not that updates are linear, as stated in \cite{gu_mamba_2023}, but rather that the recurrent update obeys the associative property. More formally, the recurrent update must be a \emph{monoid} \citep{bourbaki_elements_1965}. \cite{hinze_algebra_2004} shows that \emph{all} monoids have time-parallel implementations.

%More broadly, these models

%This works for linear functions because the associative property holds. However, there are nonlinear recurrent updates where the associative property still holds, such as in \citep{martin_parallelizing_2018}. \cite{hinze_algebra_2004} states more broadly that any \emph{monoid} has an efficient implementation \citep{bourbaki_elements_1965}.

%As stated in \cref{sec:linear_models}, many Linear Recurrent Models utilize a linear recurrent state update. This linear update is one example of a \emph{monoid} \citep{bourbaki_elements_1965} -- a concept from category theory with use in functional programming.

%TODO Closed and associative should be different lines
\begin{definition}
A tuple $(H, \bullet, e_I)$ is a monoid if:
\begin{align}
    &(a \bullet b) = c & a, b, c \in H && \text{The binary operator} \bullet \text{is closed on $H$}\\
    &(a \bullet b) \bullet c = a \bullet (b \bullet c) & a, b, c \in H && \text{The binary operator} \bullet \text{is associative}\\
    &(e_I \bullet a) = (a \bullet e_I) = a & a, e_I \in H && \text{There exists an identity element $e_I$}
\end{align}
\end{definition}
where $\bullet$ for a single input $a$ is defined as $(\hphantom{} \bullet a) = (e_I \bullet a)$.

Any monoid operator $\bullet$ can be computed in parallel across the time dimension using a parallel scan (\cref{sec:scans}) \citep{hinze_algebra_2004, dhulipala_theoretically_2021}. Given a sequence of length $n$, a work-efficient parallel scan known as the Blelloch Scan executes $O(n)$ calls to $\bullet$ in $O(n)$ space to produce $n$ outputs \citep{blelloch_prex_1990}. With $p$ parallel processors, the parallel time complexity of the scan is $O(n / p + \log p)$. For large GPUs where $n=p$, the parallel time complexity becomes $O(\log n)$. %We are unaware of any memory models that are more efficient than $O(n)$ space and $O(\log n)$ parallel time.
\section{Approach}
\label{sec:func}
While powerful, standard monoids are restrictive and cannot represent most Linear Recurrent Models in their entirety. Monoids require that the input, output, and recurrent space be identical. In memory models, we often decouple the input space, from the recurrent state space $H$, from the Markov state space $S$ (\cref{eq:recurrent_memory_batch}). Consider, for example, a navigation task where the input is an image, the recurrent state $H$ is a latent map representation, and the Markov state $S$ is a set of $x, y$ coordinates of the agent. In search of a more general memory model framework, we extend the monoid into a memory monoid, or \emph{memoroid} for short.
\begin{definition}
\label{thm:memory_monoid}
$((H, \bullet, e_I), f, g)$ constitute a memoroid if $(H, \bullet, e_I)$ defines a monoid and functions $f, g$ are:
\begin{align}
        f & : \overline{P} \mapsto H & \text{Mapping from a partial transition to the right argument of } \bullet\\
        g & : H \times \overline{P} \mapsto S & \text{Mapping a recurrent state and a partial transition to a Markov state}
\end{align}
Recall that a partial transition consists of the observation and begin flag $\overline{P} = (o, b)$. The memoroid defines a recurrent memory model (\cref{eq:recurrent_memory_batch}) over a sequence of partial transitions to produce recurrent states ($h_0, h_1, \dots \in H$) and then compute Markov states ($s_0, s_1, \dots \in S$)
\begin{comment}
\begin{align}
    \begin{bmatrix}h_0 & h_1 & \dots \end{bmatrix} &= \begin{bmatrix}e_I \bullet f(\overline{P}_0) & e_I \bullet f(\overline{P}_0) \bullet f(\overline{P}_1) & e_I \bullet f(\overline{P}_0) \bullet f(\overline{P}_1) \bullet f(\overline{P}_2) & \dots \end{bmatrix} \\ 
    \begin{bmatrix}s_0 & s_1 & \dots \end{bmatrix} &=
    \begin{bmatrix}g(\overline{P}_0, h_0) & g(\overline{P}_1, h_1) & g(\overline{P}_2, h_2) & \dots\end{bmatrix}.
\end{align}
\end{comment}
\begin{align}
    \begin{bmatrix}h_0 & h_1 & h_2 & \dots \\ s_0 & s_1 & s_2 & \dots \end{bmatrix} &= \begin{bmatrix}e_I \bullet f(\overline{P}_0) & e_I \bullet f(\overline{P}_0) \bullet f(\overline{P}_1) & e_I \bullet f(\overline{P}_0) \bullet f(\overline{P}_1) \bullet f(\overline{P}_2) & \dots \\
    g(h_0, \overline{P}_0) & g(h_1, \overline{P}_1) & g(h_2, \overline{P}_2) & \dots\end{bmatrix}.
\end{align}
Given $n$ inputs, functions $f$ and $g$ can each be split into $n$ concurrent threads. Recall that monoids have $O(\log n)$ parallel time and $O(n)$ space complexity. Consequently, \emph{all memoroids have $O(\log n)$ parallel time complexity and $O(n)$ space complexity} on the length of the sequence\footnote{Assuming (1) The binary operator $\bullet$ and $f,g$ are constant-time and constant-space, which is the case for all Linear Recurrent Models listed thus far. (2) Our processor has $n$ parallel threads of execution.}.
\end{definition}
\paragraph{Reformulating Existing Sequence Models}
As an exercise in the flexibility of our memoroid, we rewrite the Linear Transformer, the Simplified State Space Model, the Linear Recurrent Unit, and Fast and Forgetful Memory \citep{katharopoulos_transformers_2020,lu_structured_2024,orvieto_resurrecting_2023,morad_reinforcement_2023} as memoroids in \cref{sec:rewrite}. We note that our monoid reformulation of \cite{morad_reinforcement_2023} improves upon the original, exhibiting better numerically stability by replacing exponentiated cumulative sums with a Blelloch scan. 

\paragraph{Accelerated Discounted Returns}
Memoroids can model other recurrences as well. For example, we can rewrite the discounted return and Generalized Advantage Estimate (GAE) \citep{schulman_high-dimensional_2016} as a memoroids. Reformulating the discounted return and GAE targets as memoroids enables us to compute them in a GPU-efficient fashion, using a high-level framework like JAX \citep{bradbury_jax_2018}. We find that we can compute such quantities orders of magnitude more quickly than the standard approach. We provide the definitions and proofs of these formulations in \cref{thm:return} and \cref{thm:gae}. 

\paragraph{Inline Recurrent State Resets}
\label{sec:reset}
So far, we have assumed that we operate memoroids over a single episode using the Blelloch Scan. To scan over a batch of variable-length episodes, we could truncate and zero pad sequences such that each is a fixed length (i.e., SBB). However, this introduces the issues explained in \cref{sec:shortcomings}.

Since memoroids are efficient over long sequences, we could consider concatenating individual episodes into one very long sequence, removing the need for padding and truncation. Unfortunately, as the scan crosses episode boundaries, it feeds information from all prior episodes into future episodes, and information from future episodes into preceding episodes.

To resolve this issue, we propose a \emph{resettable monoid transformation}, which prevents information from leaking across episode boundaries. We can apply this transformation to any monoid (or memoroid), to produce a new monoid that respects episode boundaries.

%Fortunately, we can transform any monoid into a \emph{resettable monoid}, enabling resets at episode boundaries. This prevents information from leaking across episodes. We note that the transformed monoid operator is nonlinear, unlike the linear operators present in the S5 or LRU memoroids. The resettable monoid tracks an additional $b$ term in the recurrent state, corresponding to the begin flag denoting episode boundaries.
\begin{theorem}
    \label{thm:reset}
    All monoids $(H, \bullet, e_I)$ can be transformed into a resettable monoid $(G, \circ, g_I)$ defined as
    \begin{align}
        & G = \{ (A, b) \mid A \in H, b \in \{0,1\} \}\\
        & g_I = (e_I, 0)\\
        & (A, b) \circ (A', b') = ((A \cdot (1 - b') + e_I \cdot b') \bullet A', b \lor b')
    \end{align}
    For a single episode, the $A$ term output by the operator $\circ$ is equivalent to the output of $\bullet$. Over multiple contiguous episodes, $\circ$ prevents information flow across episode boundaries. 
    \begin{proof}
        See \cref{sec:proofs}.
    \end{proof}
\end{theorem}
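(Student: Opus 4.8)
The plan is to check the three monoid axioms for $(G,\circ,g_I)$ in turn and then establish the two semantic claims by a single structural induction. Throughout I read the expression $A\cdot(1-b') + e_I\cdot b'$ as a selector: it equals $A$ when $b'=0$ and $e_I$ when $b'=1$ (elements of the abstract set $H$ cannot literally be scaled or summed, so this is the intended meaning). Writing $\mathrm{sel}(A,b)$ for this quantity, the operator reads $(A,b)\circ(A',b') = (\mathrm{sel}(A,b')\bullet A',\ b\lor b')$. Closure is immediate since $\bullet$ is closed on $H$ and $\lor$ on $\{0,1\}$. For the identity $g_I=(e_I,0)$: on the left, $\mathrm{sel}(e_I,b')=e_I$ for either value of $b'$, so $(e_I,0)\circ(A',b') = (e_I\bullet A',\ b') = (A',b')$; on the right, $\mathrm{sel}(A,0)=A$, so $(A,b)\circ(e_I,0) = (A\bullet e_I,\ b) = (A,b)$, using the identity axiom of $\bullet$.

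Associativity is the first place real work appears. Expanding both $((A,b)\circ(A',b'))\circ(A'',b'')$ and $(A,b)\circ((A',b')\circ(A'',b''))$, the second coordinates agree because $\lor$ is associative, and the first coordinates reduce to the identity $\mathrm{sel}(\mathrm{sel}(A,b')\bullet A',\ b'')\bullet A'' = \mathrm{sel}(A,\ b'\lor b'')\bullet\mathrm{sel}(A',b'')\bullet A''$ (the right-hand side needing no parentheses by associativity of $\bullet$). I would prove this by splitting on $b''$. If $b''=1$, both sides collapse to $e_I\bullet A'' = A''$, since $\mathrm{sel}(\cdot,1)=e_I$ and $e_I$ is absorbed. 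If $b''=0$, the left side is $(\mathrm{sel}(A,b')\bullet A')\bullet A''$ and the right side is $\mathrm{sel}(A,b')\bullet A'\bullet A''$, which coincide by associativity of $\bullet$. Hence $(G,\circ,g_I)$ is a monoid, and by \cite{hinze_algebra_2004} it too admits a parallel scan.

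For the semantics I would prove one structural lemma by induction on the length of a contiguous block: for any $i\le j$, the product $(x_i,b_i)\circ\cdots\circ(x_j,b_j)$ equals $\big(x_{k}\bullet x_{k+1}\bullet\cdots\bullet x_j,\ \bigvee_{l=i}^{j}b_l\big)$, where $k=k(i,j)$ is the largest index in $\{i,\dots,j\}$ with $b_k=1$, or $k=i$ if no such index exists. The base case $i=j$ is direct (and prepending $g_I$ changes nothing, being the identity). For the step, split the block at any $i\le p<j$, apply associativity and the inductive hypothesis to the two halves, and do the same case split on whether the right half contains a begin flag, i.e.\ whether its accumulated flag is $1$: if it does, $\mathrm{sel}(\cdot,1)=e_I$ discards the left half and the most-recent-begin index of the whole block lies in the right half; if it does not, $\mathrm{sel}(\cdot,0)$ keeps the left half, the two $\bullet$-products merge by associativity, and the most-recent-begin index is inherited from the left half. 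Because this argument is valid for every split point, the lemma holds for an arbitrary parenthesization, hence for the output of any parallel scan and not merely a left-to-right fold.

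The two claims in the theorem statement then follow. Within a single episode only the first partial transition carries $b=1$ (and even if it carries $b=0$ the conclusion is the same), so $k(0,i)=0$ for every prefix and the first coordinate of the $i$-th scan output is $x_0\bullet\cdots\bullet x_i = e_I\bullet f(\overline{P}_0)\bullet\cdots\bullet f(\overline{P}_i)$, i.e.\ exactly the output of the original monoid $\bullet$. When several episodes are concatenated, each starting with $b=1$, the index $k(0,i)$ equals the start index of the episode containing $i$, so the $i$-th output depends only on that episode's partial transitions and no information crosses an episode boundary in either direction. I expect the main obstacle to be phrasing the structural lemma in the precise form that survives arbitrary parenthesizations — the accumulated $\lor$ in the second coordinate is exactly what makes this possible — rather than any individual computation, all of which are short.
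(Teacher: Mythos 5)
Your proposal is correct, and at its computational core (case-splitting on the reset flag of the right operand and using the selector semantics of $A\cdot(1-b')+e_I\cdot b'$) it matches the paper's proof in \cref{sec:proofs}. But you go a genuinely different and more complete route on two points. First, the paper states the associativity equation and then only expands the \emph{left}-associated triple products $((A,b)\circ(A',b'))\circ(A'',b'')$ for each flag combination, asserting that the results are semantically correct; you instead reduce associativity to the identity $\mathrm{sel}(\mathrm{sel}(A,b')\bullet A',b'')\bullet A''=\mathrm{sel}(A,b'\lor b'')\bullet\mathrm{sel}(A',b'')\bullet A''$ and verify \emph{both} sides by splitting on $b''$, which is what a proof of associativity actually requires. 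Second, the paper stops at three-element products and infers the multi-episode claim from that case; your structural lemma --- that any contiguous block multiplies out to $\bigl(x_k\bullet\cdots\bullet x_j,\ \bigvee_l b_l\bigr)$ with $k$ the most recent begin index --- is a new ingredient that establishes the semantic claim for sequences of arbitrary length and, because the induction is over an arbitrary split point, for arbitrary parenthesizations. That last point is exactly what is needed to justify running the operator inside a Blelloch scan, so your version buys a rigorous end-to-end argument where the paper relies on the reader to extrapolate; the paper's version buys brevity and an explicit enumeration that makes the reset behaviour easy to eyeball. One small caveat: the paper's enumeration of input pairs omits the case $(A,1)\circ(A',1)$, which your uniform treatment via $\mathrm{sel}$ covers automatically.
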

By transforming the monoid $(H, \bullet, e_I)$ within a memoroid, we no longer require separate time and batch dimensions during training. \emph{Now, memoroids can process a long sequence comprised of many distinct episodes}. Unlike \cite{blelloch_prex_1990, lu_structured_2024} which provide a reset operator for a specific model, our resettable transformation works for \emph{any} monoid.

\subsection{Tape-Based Batching}
\begin{figure}
    \centering
    \scalebox{1.0}{\begin{tikzpicture}[font=\ttfamily,
array/.style={matrix of nodes,nodes={draw, minimum size=7mm, minimum height=7mm},column sep=-\pgflinewidth, row sep=0.0mm, nodes in empty cells}]
%\matrix[array] (array) {
%0 & 0 & 0 & 0 & \textcolor{red}{1} & \textcolor{red}{1} & 2 & \textcolor{red}{3} & \textcolor{red}{3}\\};
\matrix[array] (array) {
\node[fill=green!30]{0}; & \node[fill=green!30]{0}; & \node[fill=green!30]{0}; & \node[fill=green!30]{0}; & \node[fill=red!30]{1}; & \node[fill=red!30]{1}; & \node[fill=blue!30]{2}; & \node[fill=orange!30]{3}; & \node[fill=orange!30]{3};\\};

\node[minimum size=4mm] at (array-1-1.east) (box) {};

%\matrix[array, below=of array-1-5](ptrs) {
%0 & \textcolor{red}{4} & 6 & \textcolor{red}{7}\\
%};
\matrix[array, below=of array-1-5](ptrs) {
\node(ptrs-1-1)[fill=green!30]{0}; & \node(ptrs-1-2)[fill=red!30]{4}; & \node(ptrs-1-3)[fill=blue!30]{6}; & \node(ptrs-1-4)[fill=orange!30]{7};\\
};
\node[left=0.25 of ptrs-1-1.west] {$\mathcal{I}$};

\node[above=0.5 of ptrs-1-1] (above_1) {};
%\draw[->] (ptrs-1-1.north) -- (above_1.center) -- (above_1 -| array-1-1.south east) -- (array-1-1.south east);
\node[above=0.5 of ptrs-1-2] (above_2) {};
\draw[->, red] (ptrs-1-2.north) -- (above_2.center) -- (above_2 -| array-1-5.south east) -- (array-1-5.south east);
\node[above=0.5 of ptrs-1-3] (above_3) {};
%\draw[->] (ptrs-1-3.north) -- (above_3.center) -- (above_3 -| array-1-7.south east) -- (array-1-7.south east);
\node[above=0.25 of ptrs-1-4] (above_4) {};
\draw[->, red] (ptrs-1-4.north) -- (above_4.center) -- (above_4-| array-1-8.south east) -- (array-1-8.south east);

\draw[|-|, red]([yshift=3mm]array-1-5.north) -- node[above] {$E_1$} ([yshift=3mm, xshift=3.5mm]array-1-6.north east);
\draw[|-|, red]([yshift=3mm]array-1-8.north) -- node[above] {$E_2$} ([yshift=3mm, xshift=3.5mm]array-1-9.north east);

\node[above=0.25 of array-1-1.north east] (index){Episode Index};
\draw[->] (index) -- (box.north);
\node[left=0.25 of array-1-1.west] {$\mathcal{D}$};

\end{tikzpicture}}
    \caption{A visualization of sampling in TBB, with a batch size of $B = 4$. Transitions from rollouts are stored in-order in $\mathcal{D}$, with each color denoting a separate episodes. Associated episode begin indices are stored in $\mathcal{I}$. We sample a train batch by randomly selecting from $\mathcal{I}$. For example, we might sample $4$ from $\mathcal{I}$, corresponding to $E_1$ in red. Next, we sample $7$ from $\mathcal{I}$, corresponding to $E_2$ in red. We concatenate $\mathcal{B} = \textrm{concat}(E_1, E_2)$ and return the result as a train batch.}
    %\vspace{-2em}
    \label{fig:tbb}
\end{figure}
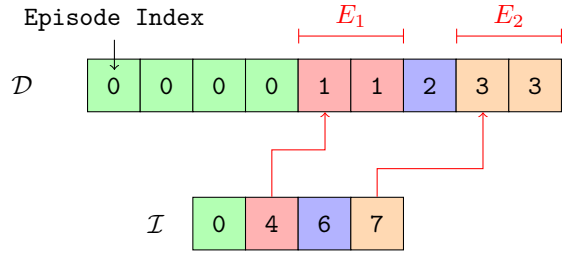

% SBB has a number of both practical and theoretical issues arising from constraining training sequences to length $L$
%Concerns around SBB stem from ensuring each sequence is exactly $L$ transitions.
%Concerns around SBB stem from the requirement that each sequence is exactly $L$ transitions. With resettable monoids, we can fold the batch and time dimensions into a single dimension, removing this length requirement. We call this long sequence the \emph{tape}. In this section, we propose how to read and write to the tape.

%without this limitation it is not immediately clear how to store and sample data. We propose a simple solution: we fold the batch dimension into the time dimension, treating the training data as a single ordered list (the tape). 

Training over the concatenation of episodes would be intractable using Transformers or RNNs due to poor sequence length scaling, while Linear Recurrent Models leak information between episodes. By combining the efficiency of memoroids with our resettable transform, we resolve these issues, enabling us to fold the batch and time dimensions into a single dimension. We call this approach \emph{Tape-Based Batching} (TBB), which consists of two operations: \emph{insertion} and \emph{sampling}. We provide operations for both on-policy and off-policy RL algorithms. Furthermore, we design TBB in a manner that greatly simplifies the implementation of recurrent loss functions.

\paragraph{Insertion}
In RL, we must store the training data (transitions) that we collect during a rollout. TBB stores them following \cref{alg:insert}. We maintain ordered lists of transitions $\mathcal{D}$, and begin indices $\mathcal{I}$, corresponding to episode boundaries in $\mathcal{D}$. Upon reaching the maximum capacity of $\mathcal{D}$, we discard old transitions by popping the episode begin index from the left of $\mathcal{I}$, and discarding the resulting episode in $\mathcal{D}$. This is guaranteed to discard the oldest episode in $\mathcal{D}$.

%We maintain ordered lists of transitions $\mathcal{D}$, and begin indices $\mathcal{I}$, corresponding to episode boundaries in $\mathcal{D}$. We update $\mathcal{I}$ based on the begin flags in the rollout, then append the rollout transitions to our dataset $\mathcal{D}$. Upon reaching the maximum capacity of $\mathcal{D}$, we must discard old transitions to make room for new ones. To make room, we pop the episode begin index from the left of $\mathcal{I}$ and discard the resulting episode in $\mathcal{D}$. Popping $\mathcal{I}$ from the left discards the oldest episode. We repeat this process until $\mathcal{D}$ can fit the new episodes.

This method works both for rollouts that contain complete episodes ($d_{n-1} = 1$), and those that contain incomplete episodes ($d_{n-1} \neq 1$), where a rollout might stop before finishing an episode. When combining incomplete episodes with multiple rollout workers, we can experience race conditions. In this scenario, it is easiest to keep one $\mathcal{D}, \mathcal{I}$ per worker to prevent race conditions. 

\paragraph{Sampling}
Once we have constructed $\mathcal{D}, \mathcal{I}$, we are ready to train a policy or compute returns. We sample transitions from $\mathcal{D}, \mathcal{I}$ following \cref{alg:sample}. If we are training on-policy, we can simply train on $\mathcal{D}$. If we are training off-policy, we randomly sample a training batch $\mathcal{B}$ from our dataset by slicing $\mathcal{D}$ using randomly-sampled sequential pairs of episode indices from $\mathcal{I}$. One could extend our sampling approach to implement Prioritized Experience Replay \citep{schaul2015prioritized} by assigning each episode or index in $\mathcal{I}$ a priority.

\paragraph{Simplified Loss Functions}
With TBB, we utilize unmodified, non-recurrent loss functions to train recurrent policies, reducing the implementation complexity of recurrent RL algorithms. Unlike SBB, there is no need to mask outputs or handle additional time dimensions like in SBB. With TBB, the only difference between a recurrent and nonrecurrent update is precomputing the Markov states $s, s'$ before calling the loss function. We demonstrate this by writing the TBB Q learning update in \cref{alg:q_update}, highlighting departures from the standard, non-recurrent Q learning update in red. For posterity, we define the standard SBB Q learning update in \cref{alg:sbb_q_update}. Note that the SBB update equation has an additional time dimension $k$ and requires a padding mask $m_{i,j}$.
%First, we randomly sample an index without replacement. We slice $\mathcal{D}$ using the sampled and next indices to retrieve a random full episode. We continue this process, concatenating our slices together until they reach the user-defined batch size $B$, truncating the final episode if necessary (\cref{fig:tbb}). One could extend our sampling approach to implement Prioritized Experience Replay \cite{schaul2015prioritized} by assigning each episode/index in $\mathcal{I}$ a priority.

\begin{algorithm}[h!]
\small
\caption{Inserting transitions using TBB}
\begin{algorithmic}
\State {\bfseries Input:} List of transitions $\mathcal{D}$, list of indices $\mathcal{I}$, buffer size $D$
\State {\bfseries Output:}  List of transitions $\mathcal{D}$, list of indices $\mathcal{I}$
\State $\rho \gets (P_0, P_1, \dots, P_{n-1})$ \hfill \Comment{Collect rollout from env}
\If{$\textrm{on\_policy}$}
    \State $\mathcal{D} \gets \rho$
    \State $\mathcal{I} \gets \textrm{where}(b_0, \dots b_{n-1})$ \hfill \Comment{Indices of new episodes}
\Else
    \While{$(\mathcal{D} + \textrm{card}(\rho)) > D$}
    
        \State $\mathcal{I} \gets \mathcal{I}[1:]$ \hfill \Comment{Buffer full, pop oldest index}
        
        \State $\mathcal{D} \gets \mathcal{D}[I[0]:]$ \hfill \Comment{Pop transitions for the oldest episode}
        
    \EndWhile
    \State $\mathcal{I} \gets \textrm{concat}(\mathcal{I}, \mathrm{card}(\mathcal{D}) + \textrm{where}(b_0, \dots b_{n-1}))$ \hfill \Comment{Update replay buffer indices}

    \State $\mathcal{D} \gets \textrm{concat}(\mathcal{D}, \rho)$ \hfill \Comment{Add new transitions to buffer}
\EndIf
\end{algorithmic}
\label{alg:insert}
\end{algorithm}
\begin{algorithm}[h!]
\small
\caption{Sampling transitions using TBB}
\begin{algorithmic}
\State {\bfseries Input:} List of transitions $\mathcal{D}$, list of indices $\mathcal{I}$, batch size $B$
\State {\bfseries Output:} Batch of transitions $\mathcal{B}$
\State $\mathcal{B} \gets ()$ \hfill \Comment{Empty list}
\While{$\mathrm{len}(\mathcal{B}) < B$}
    \State $i \sim \mathcal{U}(0, \textrm{card}(\mathcal{I}) - 1)$ \hfill \Comment{Randomly sample an index in $\mathcal{I}$}
    \State $\mathcal{B} \gets \mathrm{concat}(\mathcal{B}, D[\mathcal{I}[i]:\mathcal{I}[i+1]])$ \hfill \Comment{Append episode to batch}
\EndWhile
\State $\mathcal{B} = \mathcal{B}[:B]$ \hfill \Comment{Make batch exactly $B$ transitions}
%\State $\ell = \mathcal{L}(\mathcal{B})$
\end{algorithmic}
\label{alg:sample}
\end{algorithm}
\begin{algorithm}[h!]
\small
\caption{TBB deep Q update}
\label{alg:q_update}
\begin{algorithmic}
\State {\bfseries Input:} params $\theta$, target params $\phi$, Q function $Q$, sequence model $M$, train batch $\mathcal{B}$, discount $\gamma$, update rate $\beta$
\State {\bfseries Output:} params $\theta, \phi$
\State \textcolor{red}{$(s_1, s_{2}, \dots s_B) \gets M_\theta(P_1, \dots, P_B)$} \hfill \Comment{Estimate Markov state}
\State \textcolor{red}{$(s'_1, s'_{2}, \dots s'_B) \gets M_\phi(P_1, \dots, P_B)$} \hfill \Comment{Estimate next Markov state}
\State $\hat{y}_j = r_j + \max_{a \in A} \gamma Q_{\phi}(s'_j, a), \quad \forall \mathcal{B}[j]$ \hfill \Comment{Compute target}
\State $ \theta \gets \min_\theta \lVert Q_\theta(s_j, a_j) - \hat{y}_j \rVert, \quad \forall \mathcal{B}[j]$ \Comment{Compute loss and update parameters}
\State $\phi \gets \phi \beta + (1 - \beta) \theta $ \hfill \Comment{Update target network params}
\end{algorithmic}
\end{algorithm}
\begin{algorithm}[h!]
\small
\caption{SBB deep Q update}
\label{alg:sbb_q_update}
\begin{algorithmic}
\State {\bfseries Input:} params $\theta$, target params $\phi$, Q function $Q$, sequence model $M$, train batch $\mathcal{B}$, discount $\gamma$, update rate $\beta$
\State {\bfseries Output:} params $\theta, \phi$
\State \textcolor{red}{$\left[ \begin{smallmatrix}
    s_{1,1}, & \dots & s_{1, L}\\
    \vdots\\
    s_{B,1}, & \dots & s_{B, L}
    \end{smallmatrix}\right] \gets \left[ \begin{smallmatrix}
        M_\theta((P_{1,1}) & \dots & (P_{1,L}))\\
        \vdots\\
        M_\theta((P_{B,1}) & \dots & (P_{B,L}))\\
    \end{smallmatrix} \right]$} \Comment{Estimate Markov state} 
\State \textcolor{red}{$\left[ \begin{smallmatrix}
    s'_{1,1}, & \dots & s'_{1, L}\\
    \vdots\\
    s'_{B,1}, & \dots & s'_{B, L}
    \end{smallmatrix} \right] \gets \left[ \begin{smallmatrix}
        M_\phi((P_{1,1}) & \dots & (P_{1,L}))\\
        \vdots\\
        M_\phi((P_{B,1}) & \dots & (P_{B,L}))\\
    \end{smallmatrix} \right]$} \Comment{Estimate next Markov state}
\State $\hat{y}_{j,\textcolor{red}{k}} = (r_{j, \textcolor{red}{k}} + \max_{a \in A} \gamma Q_{\phi}(s'_{j,\textcolor{red}{k}}, a)), \quad \forall \mathcal{B}[j, \textcolor{red}{k}]$  \Comment{Compute target with extra time dimension $k$}
\State $ \theta \gets \min_\theta \textcolor{red}{m_{j,k}} \cdot \lVert Q_\theta(s_{j,\textcolor{red}{k}}, a_{j,\textcolor{red}{k}}) - \hat{y}_{j,k}  \rVert, \quad \forall \mathcal{B}[j, \textcolor{red}{k}]$ \Comment{Compute loss and update params}
\State $\phi \gets \phi \beta + (1 - \beta) \theta$ \Comment{Update target network params}
\end{algorithmic}
\end{algorithm}

%With TBB, the only change we make to a non-recurrent Q update is estimating the Markov states from observation, while SBB changes the batch dimensionality changes and requires careful indexing and masking.

% TODO CITE/READ THIS, provides general form for resets https://www.sciencedirect.com/science/article/pii/S0167642318300066
% AND THIS https://www.google.com/url?sa=t&rct=j&q=&esrc=s&source=web&cd=&ved=2ahUKEwiFrY-I_riBAxW9TkEAHUzwBkQQFnoECCEQAQ&url=http%3A%2F%2Fwww.cs.wisc.edu%2F~tvrdik%2F22%2Fps%2FSection22.ps&usg=AOvVaw1lMfCoL_dtrELsIrywfGyB&opi=89978449

\section{Experiments and Discussion}
We begin our experiments by investigating the shortcomings of SBB, specifically the theoretical issues stemming from truncated BPTT. We then compare TBB to SBB across a variety of tasks and models. Finally, we examine the wall-clock efficiency of memoroids. 

Our experiments utilize tasks from the POPGym benchmark \citep{morad_popgym_2023}, and all TBB to SBB comparisons use identical hyperparameters and random seeds. We validate our findings across Simplified State Space Models (S5), Linear Recurrent Units (LRU), Fast and Forgetful Memory (FFM), and the Linear Transformer (LinAttn) memoroids. We train our policies using Double Dueling DQN \citep{van_hasselt_deep_2016, wang_dueling_2016}. See \cref{sec:experiment_setup} for architecture and training details.

\paragraph{What are the Consequences of Truncating BPTT?}
In \cref{sec:shortcomings}, we discussed how the estimated (truncated) gradient used in SBB differs from the true gradient. We aim to determine whether the estimated gradient used in SBB is a sufficient approximation of the true gradient.  We note that if all episodes are a fixed length, and we set $L$ to be this length, both SBB and TBB produce identical results -- although this is rare in practice.

We utilize the \emph{Reward Memory Length} (RML) and \emph{Value Memory Length} (VML) metrics from \cite{ni_when_2024}. The RML refers to the maximum temporal dependency $j$ required to predict the expected reward, while the VML determines at which point $k$ prior observations stop affecting the Q value. In other words, the environment defines the RML while the memory model defines the VML. We hope to find that $\textrm{VML} = \textrm{RML}$; that the Q value only depends on the necessary history.
\begin{align}
    \mathbb{E} \left[ R(s, a, s') \mid o_{0:n} \right] &= \mathbb{E} \left[ R(s, a, s') \mid o_{j:n} \right] & \textrm{(RML)}\\
    Q(M(o_{0:n}), a) &= Q(M(o_{k:n}), a) & \textrm{(VML)}
\end{align}
We examine the VML for the Repeat Previous task with a known RML of ten timesteps. We measure the VML as the impact each observation has on the terminal Q value of an episode (i.e., $Q(s_n, a_n) = R(s_n, a_n, s_{n+1})$). Any observations older than ten timesteps are not necessary to predict the reward, and given a relative-time policy, should have little to no impact on the terminal Q value. Recall that we can write a memory model as $s_n = M(o_0, \dots o_n)$. We explicitly compute
\begin{align}
    \left\lvert \frac{\partial Q(s_n, a_n)}{\partial o_i} \right\rvert = \left\lvert \frac{\partial Q(s_n, a_n)}{\partial s_n} \frac{\partial s_n}{\partial o_i} \right\rvert,
\end{align}
and plot the results for FFM, S5, LinAttn, and LRU models in \cref{fig:ffm_grad}. We examine the VML and RML of other model and task combinations in \cref{sec:sensitivity}.

Surprisingly, the VML differs significantly from the RML. The RML is fixed at ten, while the VML appears to span the entire episode. This means that the recurrent models are unable to ignore uninformative prior observations, suggesting that \emph{truncated BPTT degrades recurrent value estimators}. Learned recurrent Q functions do not generalize well over time, although we find that policies trained with TBB tend to generalize better.

In the case of FFM, roughly 90\% of the Q value is produced outside the segment boundaries, where truncated BPTT cannot reach. This means that these unnecessary prior observations have a significant impact on our Q values, yet we are unable to forget or ignore such observations. Our findings suggest that SBB could be a major contributor to the increased difficulty and reduced sample efficiency of recurrent RL, as we demonstrate in the next experiment.
\begin{figure}[]
    \centering
    \includegraphics[width=\linewidth]{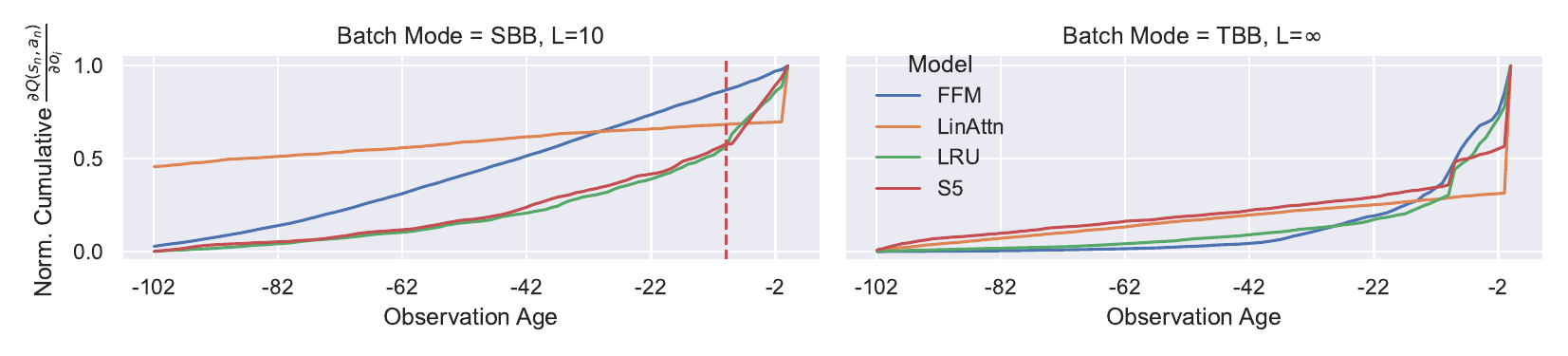}
    \caption{We demonstrate that SBB can hurt Q learning through truncated BPTT. We examine the Repeat Previous task, with $\textrm{RML} = 10$, comparing SBB (left) to TBB (right). For SBB, we set $L = \textrm{RML} = 10$ to capture all necessary information. After training, we plot the cumulative partial derivative with respect to the observations on the y-axis. \emph{This partial derivative determines the VML -- how much each prior observation contributes to the Q value}. We draw a vertical red line at $L = \textrm{RML} = 10$. We see that across models, a majority of the Q value is not learnable when using SBB. Even when we set $L = \infty$ using TBB, we see that the VML still spans far beyond the RML. This surprising finding shows that \emph{truncated BPTT degrades recurrent value estimators.}}
    %In the CDF plots, the red line and dot correspond to $L = RML = 10$, the point at which old observations should have no impact on the Q value. We see, however, that the observations older than 10 timesteps have a significant impact on Q values. For FFM trained with SBB $L = RML = 10$, \emph{roughly 90\% of the Q value is not learnable}. There is no ``safe'' segment length $L$, in the sense that even the initial observation contributes a nonzero amount to the final Q value. This emphasizes how important our proposed TBB is, because it enables backpropagation over the entire sequence. We see that policies trained with TBB tend to be less sensitive to unnecessary inputs, suggesting that they generalize better.}
    \label{fig:ffm_grad}
\end{figure}

\paragraph{Is TBB More Sample Efficient?}
For our second experiment, we measure the difference in sample efficiency between TBB and SBB. There are two reasons that TBB could improve upon SBB sample efficiency: (1) As previously discovered, the truncated gradient used by SBB is often a poor estimate of the true gradient (2) SBB decreases the effective batch size through zero padding. We note that the cost of zero padding in SBB is equivalent to the cost of real data -- it takes up equivalent space in the replay buffer and takes just as much compute to process as real data. We report some combinations of model and task in \cref{fig:return_small} and present the full results in \cref{sec:app_exp}. 
\begin{figure}[ht]
    \centering
    \includegraphics[width=\linewidth]{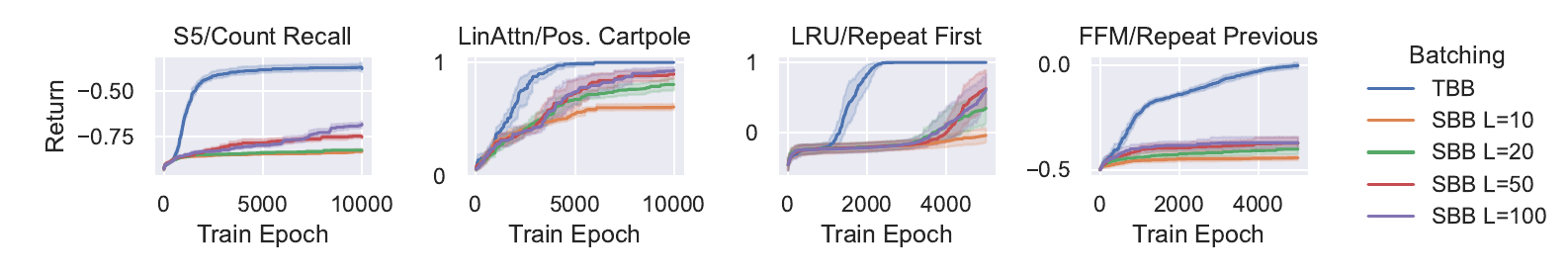}
    \caption{We compare TBB (ours) to SBB across POPGym tasks and memory models, reporting the mean and 95\% bootstrapped confidence interval of the evaluation return over ten seeds. We find that TBB significantly improves sample efficiency. See \cref{fig:all_envs} for more experiments.}
    \label{fig:return_small}
\end{figure}

We find that TBB produces a noticeable improvement in sample efficiency over SBB, across nearly all configurations of memory model and environment. Even for large segments lengths $L=100$, we find a significant gap between SBB and TBB. SBB must make an inherent tradeoff -- it can use long segments to improve gradient estimates at the cost of smaller effective batch sizes, or shorter segments to improve the effective batch size at the expense of a worse gradient estimate. TBB does not need to make this tradeoff. In our experiments, SBB with larger $L$ always outperforms shorter $L$, suggesting that the gradient estimation error is a larger contributor to SBB's lackluster performance than reduced effective batch sizes.

\paragraph{Wall-Clock Efficiency}
In \cref{fig:return_runtime}, we investigate the wall-clock efficiency of memoroids. We find that memoroids compute the discounted return and GAE roughly three orders of magnitude faster than a standard implementation. Next, we compare the wall clock time TBB and SBB take to train a policy from start to finish. For SBB, the parallel time complexity is $O(\log L)$ while TBB has $O(\log B)$ complexity where $B > L$, but in practice there is no perceivable difference in wall-clock time. One possible reason for this discrepancy is that SBB applies expensive split-and-pad operations to the trajectories, while TBB does not. Each fragment contains a varying number of transitions, which corresponds to a varying amount of padding we need to add. Variable-size operations are generally slow and difficult to batch efficiently.

\begin{figure}[h]
    \centering
    \begin{minipage}[t]{0.45\linewidth}
    \vspace{0pt}
    \includegraphics[width=0.85\linewidth]{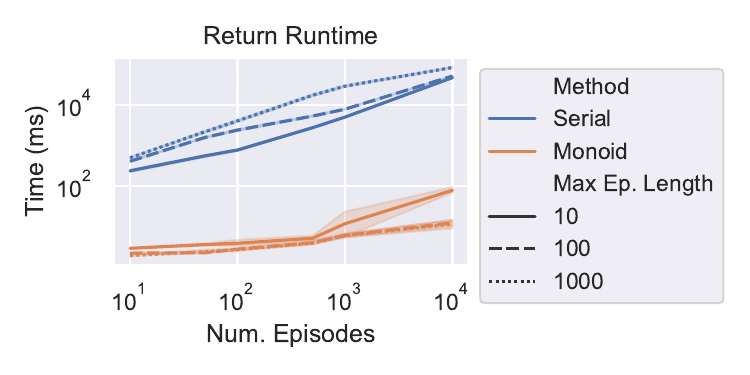} 
    \end{minipage}% 
    \hfill
    \begin{minipage}[t]{0.45\linewidth}
    \vspace{1em}
    \small
    \begin{tabular}{lrr}
        \hline
        Method   & Train Time (s) & Std. Dev. (s) \\
        \hline
        SBB L=10               & 886.39 & 54.47          \\
        SBB L=20               & 886.30 & 49.71          \\
        SBB L=50               & 887.58 & 50.29         \\
        SBB L=100              & 886.25 & 49.87         \\
        TBB                    & 886.87 & 53.21          \\
        \hline
    \end{tabular}
    \end{minipage}
    \caption{(Left) We compare how long it takes to compute the discounted return using our memoroid, compared to the standard way of iterating through a batch. Computing the discounted return is orders of magnitude faster when using our memoroid implementation. (Right) we compare the total time to train a policy on Repeat First. For both experiments, we evaluate ten random seeds on a RTX 2080Ti GPU.}
    \label{fig:return_runtime}
\end{figure}

\paragraph{Limitations and Future Work}
According to our sensitivity analysis, old observations unexpectedly impacted the Q value across models and tasks. Moving forward, we suggest testing newly-designed memory models to see whether $\textrm{VML} = \textrm{RML}$, to determine whether such models truly generalize over time.

In our experiments, we focused on long-term memory tasks from the POPGym benchmark, each of which tests a specific aspect of long-term memory. We did not experiment on environments like Atari, primarily because it is unclear to what extent Atari tasks require long-term memory.

Although memoroids scale well to long sequences, TBB still pays an increased $\log B$ time cost compared with SBB's $\log L$ cost. There was no perceptible difference in our experiments, but very long sequences such as those used for in-context RL could incur more noticeable training costs. TBB does not strictly require memoroids, but would likely be intractable for RNN or Transformer-based memory models.
\begin{comment}
\section{Related Work}
%We are not the first to provide a resettable parallel scan operator, however, we believe we are the first to discover and prove a general resettable form of the Blelloch Scan, at least in the context of sequence models. 
\cite{lu_structured_2024} propose a resettable scan operator specifically for the S5 model. Our resettable scan operator applies to any monoid. They also provide a batching method for PPO that is inbetween SBB and TBB. However, their method minibatches over the $B$ dimension, and thus does not generalize to off-policy algorithms, or on-policy algorithms using a single worker. \cite{blelloch_prex_1990} provide a resettable scan for quicksort, but provide no proof nor a general form. \cite{gu_mamba_2023} unify a number of linear recurrent models as LTI systems, however, they do not provide a reset mechanism or generalize to non-linear recurrent updates.
%generalizing to any linear operator or convolution. However, as we discuss in \cref{sec:func} and \cref{thm:reset}, memory monoids are not restricted to linear or time-invariant operators.

There is prior work on special types of experience replay for recurrent models. \cite{hausknecht_deep_2015, kapturowski_recurrent_2019} find that warming up segment-based RNNs by replaying older segments can improve the return. However, implementing the warmup is difficult, and it is not clear that warming up the RNN is better than simply increasing the segment size by the warmup length.

%There is a large amount of literature on special types of experience replay for recurrent models \citep{kapturowski_recurrent_2019}. However, we just focus on the simple case of experience replay with uniform sampling. Our contribution is more on the storage and training mechanics, rather than the sampling methodology. Although we show that segmenting can reduce sample efficiency and provide less capable policies, \cite{hausknecht_deep_2015,kapturowski_human-level_2023} show promising Atari performance given sufficient compute and long sequence lengths, although it is not clear how necessary long-term memory is for solving Atari.
\end{comment}

\section{Conclusion}
We introduced memoroids as a unifying framework for efficient sequence modeling. We found that memoroids can represent a large number of efficient recurrent models, as well as the discounted return and the advantage. Using our resettable transformation, we extended our approach to encompass batching across variable length sequences. Given the efficiency of memoroids over long sequences, we questioned whether the standard split-and-pad approach to POMDPs was still necessary. We found that said approach causes issues, with shorter segment lengths hampering sample efficiency and ultimately converging to lower returns. We proposed a simple change to batching methodology, that when combined with memoroids, improves sample efficiency at a negligible cost.

\section*{Acknowledgements}
We gratefully acknowledge the support of Toshiba Europe Ltd. This work was also supported in part by ARL DCIST CRA W911NF-17-2-0181 and European Research Council (ERC) Project 949940 (gAIa). We thank Matteo Bettini for suggesting the term ``memoroid''.

\clearpage

\clearpage
\bibliographystyle{iclr}
\bibliography{main}
\appendix
\section{Return Comparison Between TBB and SBB}
\label{sec:app_exp}
\begin{figure}[H]
    \centering
    \includegraphics[width=0.9\linewidth]{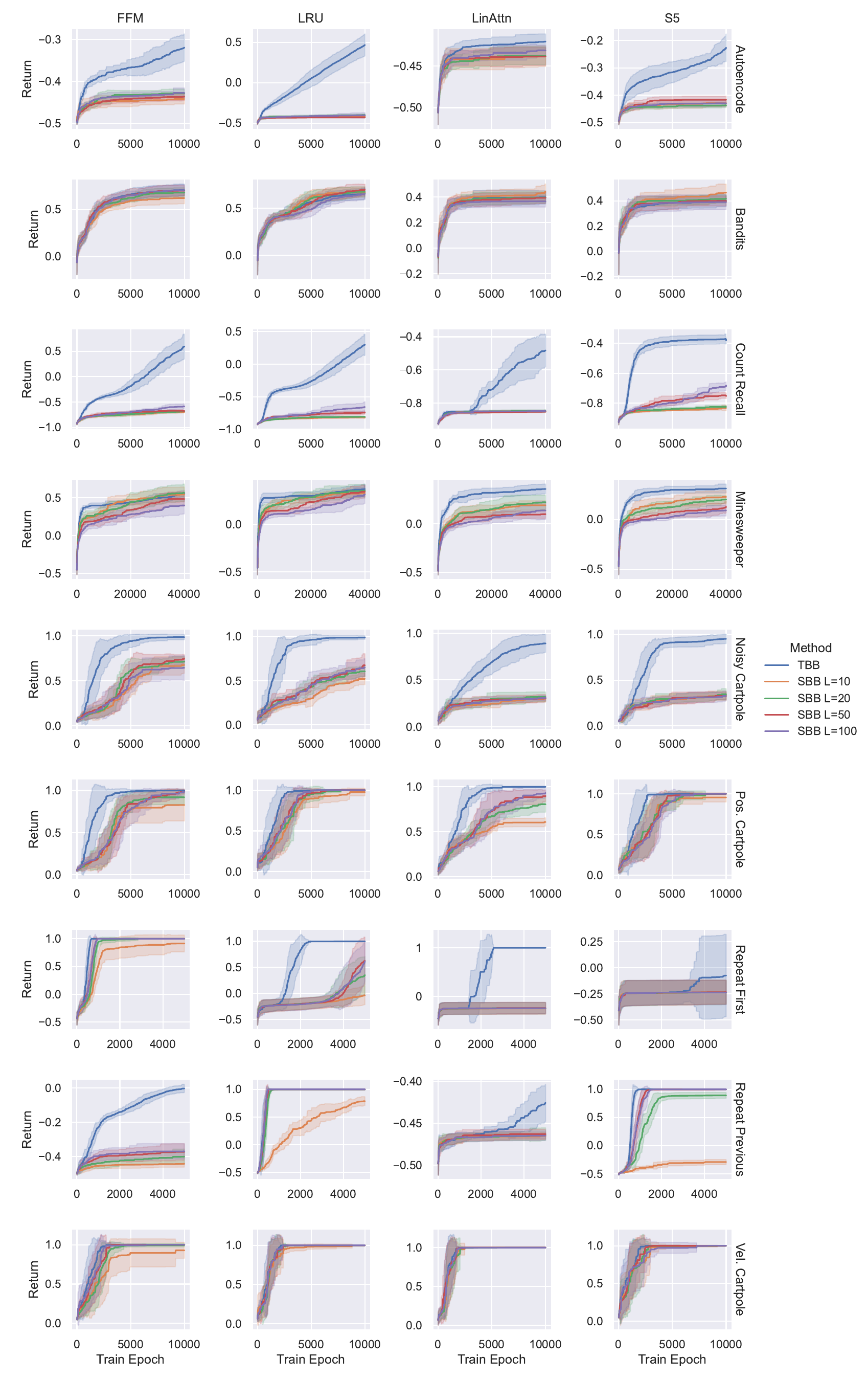}
    %\caption{We run four memoroids on six different POPGym environments over ten seeds and report the mean and 95\% bootstrapped confidence interval. The POPGym environments have a minimum episodic return of -1.0 and a maximimum of 1.0. In virtually all experiments, Tape-Based Batching provides improved sample efficiency over all tested segments length using Segment-Based Batching. The Count Recall and Autoencoder environments have temporal dependencies that span the entire sequence, demonstrating the importance of TBB for long range dependencies. On the other hand, Positional Cartpole has a temporal dependency of two timesteps, and so policies trained via SBB can still do reasonably well. Like Count Recall, Repeat First has long term temporal dependencies, however, SBB-trained methods do better than in Count Recall because Repeat First requires storing and recalling only a single observation.}
    \label{fig:all_envs}
\end{figure}
\captionof{figure}{We run four memoroids on nine different POPGym environments over ten seeds and report the mean and 95\% bootstrapped confidence interval. The POPGym environments have a minimum episodic return of -1.0 and a maximimum of 1.0. In virtually all experiments, Tape-Based Batching provides improved sample efficiency over all tested segments length using Segment-Based Batching. The Count Recall and Autoencoder environments have temporal dependencies that span the entire sequence, demonstrating the importance of TBB for long range dependencies. On the other hand, Positional Cartpole has a temporal dependency of two timesteps, and so policies trained via SBB can still do reasonably well. Like Count Recall, Repeat First has long term temporal dependencies, however, SBB-trained methods do better than in Count Recall because Repeat First requires storing and recalling only a single observation. All policies perform poorly on bandits because we use a deterministic policy, and this environment benefits from a stochastic policy.}

\begin{figure}[H]
    \centering
    \includegraphics[width=\linewidth]{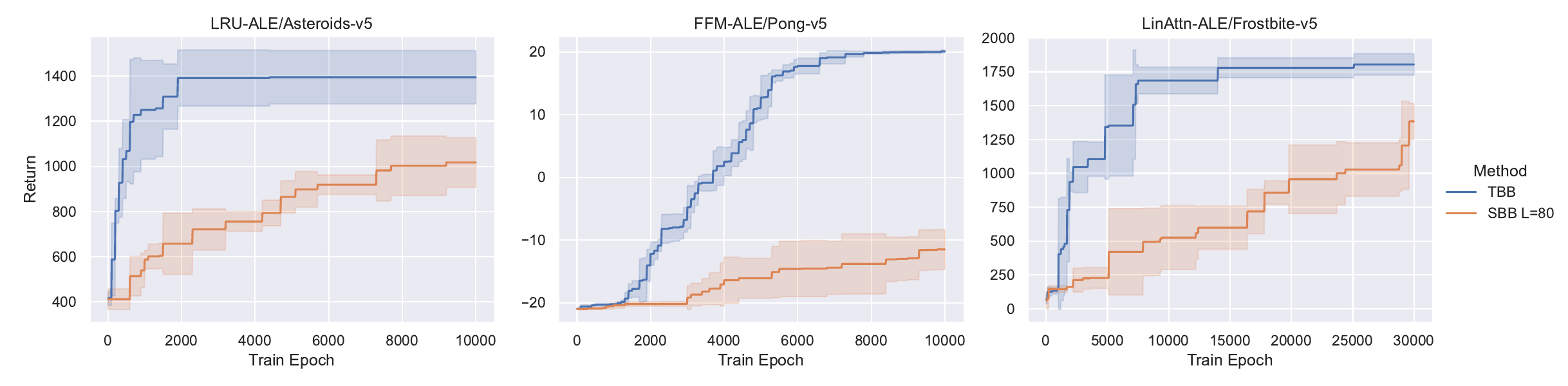}
    \caption{We examine three memoroids on Atari environments from the Arcade Learning Environment (ALE) \cite{bellemare_arcade_2013}, plotting the mean and 95\% confidence interval over three random seeds. In all environments, we see that TBB outperforms SBB.}
\end{figure}

\vspace{1em}
\section{Observation Sensitivity Analysis}
\label{sec:sensitivity}
\begin{figure}[h!]
    \centering
    \includegraphics[width=\linewidth]{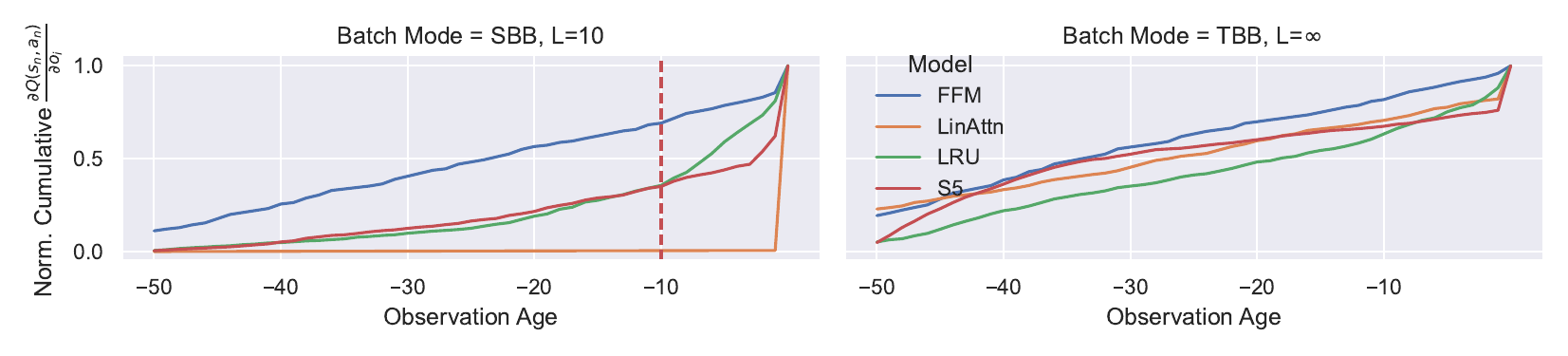}
    \caption{We follow a similar approach to \cref{fig:ffm_grad} for the Repeat First environment from the POPGym benchmark. In Repeat First, each the agent receives reward for outputting the initial observation at each timestep. Here, we would expect to see virtually all of the probability mass on the initial observation, and zero elsewhere. Again, we see that the gradient is distributed throughout the episode, suggesting that VML will always span the episode. The Linear Transformer (LinAttn) with SBB does very poorly on this task, and so its gradient distribution is not informative.}

    %Unlike Repeat Previous, we expect that the initial observation should have a large contribution to the Q value. We see that for policies trained with TBB, the initial timestep contributes more to Q value than with SBB. In all cases, the Q value is still highly dependent on inputs that are not important.}
    \label{fig:rpf_grad0}
\end{figure}
\clearpage

\section{Segment-Based Batching}
\label{sec:sbb_appendix}
\label{sec:segment}
After collecting episodes $E$ during a rollout, we split $E$ into fragments $F$ such that each $F$ has a maximum length of $L$. Fragments are zero padded from the right until they are precisely length $L$, turning them into segments $\sigma$ and padding masks $m$. The segments are stacked into a dataset $\mathcal{D}$, enabling easy batching, storage, and training (\cref{fig:segment_viz}). We define this approach more accurately in the following paragraphs.

We define a segment $\sigma$ as a length $L$ sequence of transitions. During collection, episodes $E$ longer than $L$ transitions are \emph{split} into fragments $F$. Fragments are then \emph{zero-padded} to be length $L$, resulting in fixed-size segments $\sigma$ and associated masks $m$. The resulting segments and masks are stacked into a dataset $\mathcal{D}$, enabling easy batching, storage, and training (\cref{fig:segment_viz}).

The split function splits a single episode $E$ into one or more fragments $F$, each of size $L$ except for the final fragment.
\begin{align}
    \small
    \begin{matrix}
    F_0\\
    F_1\\
    \vdots\\
    F_k \end{matrix} = \begin{matrix}
    T_0, & T_1, & \dots & T_{L-1} \\
    T_{L}, & T_{L+1}, & \dots & T_{2(L-1)} \\
    \vdots & & &\\
    T_{k L}, & \dots & T_{n} &
    \end{matrix}
\end{align}
The pad function zero pads a fragment $F$ into a fixed size segment $\sigma$ and associated mask $m$ denoting the padding elements
\begin{align}
    \small
    \sigma, m &= \mathrm{pad}(F, L) \\
                &= \textrm{concat}(F, 0^{L - \textrm{card}(F)}), \, \textrm{concat}(1^{\textrm{card}(F)}, 0^{L - \textrm{card}(F)})
    %\sigma &\in L \times \textrm{card}(T), m \in L \times \textrm{card}(T).
\end{align}
Using our split and pad operators, we split and pad each incoming episode, producing one or more segments and associated masks for each episode
\begin{align}
    \small
    \begin{bmatrix}
    \sigma_0, m_0 \\
    \vdots \\
    \sigma_k, m_k \\
    \end{bmatrix} =  \mathrm{pad}(F_i, L), \quad \forall F_i \in \mathrm{split}(E, L).
\end{align}
We represent our training dataset $\mathcal{D}$ as the concatenation of segments and masks
\begin{align}
    \small
    \mathcal{D} = \textrm{concat} \left( \begin{matrix}
         \left[\begin{smallmatrix}
            \sigma_0, m_0 \\
            \vdots \\
            \sigma_k, m_k \\
        \end{smallmatrix} \right] \\
        \left [\begin{smallmatrix}
            \sigma_{k+1}, m_{k+1} \\
            \vdots \\
            \sigma_{j}, m_{j}
        \end{smallmatrix} \right] \\
        \vdots 
        \end{matrix} \right)
    = \left[ \begin{smallmatrix}
        \sigma_0, m_0 \\
        \vdots \\
        \sigma_k, m_k \\
        \sigma_{k + 1}, m_{k+1} \\
        \vdots\\
        \sigma_j, m_j \\ 
        \vdots
    \end{smallmatrix} \right]
\end{align}
During training, we randomly sample rows from $\mathcal{D}$ for minibatching (on-policy) or experience replay (off-policy).

\clearpage

\section{The Discounted Return as a Memoroid}
\begin{theorem}
    \label{thm:return}
    The discounted cumulative return given by 
    \begin{align}
        G &= \sum_{t=0}^\infty \gamma^t r_t
    \end{align} 
    is equivalent to computing the following memoroid over $r_0, r_1, \dots$
    \begin{align}
        H &= \{(a, r) \mid a \in [0, 1], r \in \mathbb{R}\}\\
        e_I &= (1, 0)\\
        (a, r) \bullet (a', r') &= (a a', a r' + r)\\
        f(o, b) &= (\gamma, o)\\
        g((a, r), (o, b)) &= r.
    \end{align}
    \begin{proof}
        We prove the correctness of our discounted return memoroid by showing the expansion is equivalent to the discounted return.
        \begin{align}
            (1, 0) \bullet (\gamma, r_0) &= (\gamma, r_0 + 0) = (\gamma, r_0)\\
            (1, 0) \bullet (\gamma, r_0) \bullet (\gamma, r_1) &= (1 \cdot \gamma \cdot \gamma, 0 + 1 \cdot r_0 + \gamma r_1) = (\gamma^2, r_0 + \gamma r_1)\\
            (1, 0) \bullet (\gamma, r_0) \bullet \dots (\gamma, r_n) &= (1 \cdot \gamma \cdot \gamma \dots \cdot \gamma, 1 \cdot r_0 + \gamma r_1 + \dots \gamma^{n} r_n)\\ 
            &= \left( \gamma^{n}, \sum_{i=0}^n \gamma^{i} r_{i} \right)
        \end{align}
        If we let $n \rightarrow \infty$, we see that the second element in the monoid tuple approaches the discounted return
        \begin{align}
            \lim_{n \rightarrow \infty} \sum_{i=0}^n \gamma^{i} r_{i} = \sum_{i=0}^\infty \gamma^{i} r_{i}
        \end{align}
    \end{proof}
\end{theorem}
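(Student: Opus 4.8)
The plan is to establish three things in turn: that the stated tuple genuinely forms a memoroid in the sense of \cref{thm:memory_monoid}; that running it on the reward sequence produces the truncated discounted returns; and that the limit of those equals $G$. I would begin with the monoid axioms, which the informal computation passes over. Closure holds because $a,a'\in[0,1]$ implies $aa'\in[0,1]$ and $ar'+r\in\mathbb{R}$, so $\bullet$ maps $H\times H$ into $H$; likewise $f(o,b)=(\gamma,o)\in H$ since the discount satisfies $\gamma\in[0,1]$, and $g$ maps into $S=\mathbb{R}$. Associativity holds because both $((a,r)\bullet(a',r'))\bullet(a'',r'')$ and $(a,r)\bullet((a',r')\bullet(a'',r''))$ expand to $(aa'a'',\,aa'r''+ar'+r)$, and the identity property is the one-line check $(1,0)\bullet(a,r)=(a,r)=(a,r)\bullet(1,0)$. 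I would write the associativity computation out in full, since the $O(\log n)$ parallel-scan guarantee of \cref{thm:memory_monoid} rests entirely on it.

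Next I would compute the scan in closed form. When the memoroid is run on the reward sequence, $f$ sends the $t$-th input to $(\gamma,r_t)$, so the recurrent state after reading $r_0,\dots,r_n$ is $h_n = e_I\bullet f(\overline{P}_0)\bullet\cdots\bullet f(\overline{P}_n)$. I claim, by induction on $n$, that $h_n=(\gamma^{n+1},\,\sum_{i=0}^{n}\gamma^i r_i)$: the base case is $h_0=(1,0)\bullet(\gamma,r_0)=(\gamma,r_0)$, and the inductive step is $h_n=h_{n-1}\bullet(\gamma,r_n)=(\gamma^{n},\,\sum_{i=0}^{n-1}\gamma^i r_i)\bullet(\gamma,r_n)=(\gamma^{n+1},\,\gamma^n r_n+\sum_{i=0}^{n-1}\gamma^i r_i)$. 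One must be slightly careful that the first coordinate is $\gamma^{n+1}$ and not $\gamma^{n}$, but this does not matter, since the output map discards it: $s_n=g(h_n,\overline{P}_n)$ is the second coordinate, namely the $n$-step truncated return $\sum_{i=0}^{n}\gamma^i r_i$.

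Finally, letting $n\to\infty$ identifies $s_n$ with $G=\sum_{i=0}^{\infty}\gamma^i r_i$. This is the one step I would not wave through: the memoroid literally outputs the finite partial sums $s_n$, so the asserted equality is a statement about their limit. Under the standard assumptions that make the discounted objective well-defined (bounded rewards and $\gamma\in[0,1)$), the partial sums are Cauchy and converge to $G$, and I would state that caveat explicitly. Thus the main obstacle is not any single hard calculation but being precise about the sense in which the memoroid is equivalent to $G$: the scan reproduces the truncated returns exactly, and the infinite-horizon return is recovered only in the limit.
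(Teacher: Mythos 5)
Your proposal is correct and follows essentially the same route as the paper's proof: expand the scan on $(\gamma, r_t)$ pairs, observe that the second coordinate accumulates the partial sum $\sum_{i=0}^{n}\gamma^i r_i$, and pass to the limit. Your version is somewhat more thorough than the paper's --- you verify the monoid axioms (closure, associativity, identity), which the paper's proof omits entirely, you correctly note that the first coordinate of the $n$-th state is $\gamma^{n+1}$ rather than the $\gamma^{n}$ written in the paper (harmless, since $g$ discards it), and you make explicit the convergence hypotheses ($\gamma \in [0,1)$, bounded rewards) that the paper leaves implicit --- but these are refinements of the same argument rather than a different one.
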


\clearpage

\section{The Generalized Advantage Estimate as a Memoroid}
%\label{sec:gae}
 Let us define Generalized Advantage Estimation (GAE) in memoroid form:
\begin{theorem}
    \label{thm:gae}
    The GAE target given by 
    \begin{align}
        A_t = \sum_{l=0}^\infty (\lambda \gamma)^l \delta_{t+l}; \quad \delta_{t} = r_t + \gamma V(s_{t+1}) - V(s_t)
    \end{align} 
    is equivalent to computing the following memoroid over $\delta_t, \delta_{t+1}, \dots$
    \begin{align}
        H &= \{(a, g) \mid a \in [0, 1], g \in \mathbb{R}\}\\
        H_I &= (1, 0)\\
        (a, g) \bullet (a', g') &= (a a', a g' + g)\\
        f(o, b) &= (\gamma \lambda, o)\\
        g((a, g), (o, b)) &= g.
    \end{align}
    \begin{proof}
        We prove the correctness of our GAE memoroid by showing the expansion is equivalent to the GAE target. This proof is very similar to the proof of the discounted return.
        \begin{align}
            (1, 0) \bullet (\gamma \lambda, \delta_t) &= (\gamma \lambda, \delta_t + 0) = (\gamma \lambda, \delta_t)\\
            (1, 0) \bullet (\gamma \lambda, \delta_{t}) \bullet (\gamma \lambda, \delta_{t+1}) &= (1 \cdot \gamma \lambda \cdot \gamma \lambda, 0 + 1 \cdot \delta + \gamma \lambda \delta_{t+1}) = ((\gamma \lambda)^2, \delta + \gamma \lambda \delta_{t+1})\\
            (1, 0) \bullet (\gamma \lambda, \delta_t) \bullet \dots (\gamma \lambda, \delta_{(t+n)}) &= (1 \cdot \gamma \lambda \cdot \gamma \lambda \dots \cdot \gamma \lambda, 1 \cdot \delta + \gamma \lambda \delta_{t+1} + \dots (\gamma \lambda)^{n} \delta_{t+n}) \\
            &= \left((\gamma \lambda)^{n}, \sum_{l=0}^n (\gamma \lambda)^{l} \delta_{t+l} \right)
            %\textrm{Now, let us write the sequence of terminal outputs}\\
            %&= (\gamma \lambda, \delta_t), ((\gamma \lambda)^2, \delta_t + \gamma \lambda \delta_{t+1}), \dots \left((\gamma \lambda)^{n}, \sum_{l=0}^n (\gamma \lambda)^{l} \delta_{t+l} \right)
        \end{align}
        If we let $n \rightarrow \infty$, we see that the second element in the monoid tuple approaches the GAE target
        \begin{align}
            \lim_{n \rightarrow \infty} \left( \sum_{l=0}^n (\gamma \lambda)^{l} \delta_{t+l} \right) = \sum_{l=0}^\infty (\lambda \gamma)^l \delta_{t+l}
        \end{align}
        
    \end{proof}
\end{theorem}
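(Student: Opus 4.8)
The plan is to establish the claim by induction on the number of terms in the scan, following exactly the structure already laid out for the discounted-return memoroid in \cref{thm:return}, since the GAE operator $(a,g)\bullet(a',g')=(aa',ag'+g)$ is identical to the return operator and only the scalar $\gamma$ is replaced by $\gamma\lambda$. First I would fix the inductive hypothesis: after folding $(1,0)\bullet(\gamma\lambda,\delta_t)\bullet\dots\bullet(\gamma\lambda,\delta_{t+n})$, the accumulated pair equals $\bigl((\gamma\lambda)^{n+1},\ \sum_{l=0}^{n}(\gamma\lambda)^l\delta_{t+l}\bigr)$. The base case $n=0$ is the first displayed line of the statement's proof sketch: $(1,0)\bullet(\gamma\lambda,\delta_t)=(\gamma\lambda,\delta_t)$, so the first component is $(\gamma\lambda)^1$ and the second is $\delta_t=\sum_{l=0}^0(\gamma\lambda)^l\delta_{t+l}$, matching the hypothesis.

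For the inductive step I would assume the pair after $n$ folds is $\bigl((\gamma\lambda)^{n+1},\,G_n\bigr)$ with $G_n=\sum_{l=0}^{n}(\gamma\lambda)^l\delta_{t+l}$, then apply $\bullet$ once more with the incoming element $f(o_{t+n+1},b)=(\gamma\lambda,\delta_{t+n+1})$. By the definition of the operator, the new first component is $(\gamma\lambda)^{n+1}\cdot(\gamma\lambda)=(\gamma\lambda)^{n+2}$, and the new second component is $(\gamma\lambda)^{n+1}\delta_{t+n+1}+G_n = \sum_{l=0}^{n}(\gamma\lambda)^l\delta_{t+l} + (\gamma\lambda)^{n+1}\delta_{t+n+1} = \sum_{l=0}^{n+1}(\gamma\lambda)^l\delta_{t+l}$, which is precisely $G_{n+1}$. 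This closes the induction and reproduces the third displayed line of the sketch. I would then invoke that $g((a,g),(o,b))=g$ simply reads off the second component, so the memoroid's Markov-state output at index $t+n$ is exactly the truncated sum $G_n$.

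Finally I would take the limit $n\to\infty$, noting that the second coordinate converges to $\sum_{l=0}^{\infty}(\gamma\lambda)^l\delta_{t+l}=A_t$, which is the defining expression for the GAE target; the first coordinate $(\gamma\lambda)^{n+1}\to 0$ for $\gamma\lambda\in[0,1)$ is immaterial to the output since $g$ discards it. I would remark that associativity and the identity property required to legitimately write the iterated fold as a parallel scan are inherited verbatim from the return memoroid, because the two monoids share the same carrier structure $\{(a,g)\}$ and the same binary operation. The main obstacle is essentially cosmetic rather than mathematical: the only subtlety is keeping the exponent bookkeeping consistent (the first coordinate tracks $(\gamma\lambda)^{n+1}$ after $n+1$ elements have been folded in), and confirming that the discount factor introduced by $f$ is $\gamma\lambda$ rather than $\gamma$ so that the geometric weighting in $A_t$ comes out correctly; once that indexing is pinned down, the computation is a routine telescoping mirror of \cref{thm:return}.
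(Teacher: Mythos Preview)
Your proposal is correct and follows essentially the same approach as the paper: unroll the fold to show the second component of the accumulated pair equals the partial sum $\sum_{l=0}^{n}(\gamma\lambda)^l\delta_{t+l}$, then pass to the limit. The only difference is presentational---you phrase the unrolling as a formal induction whereas the paper writes out the first few steps and the general pattern directly---and your exponent bookkeeping on the first coordinate ($(\gamma\lambda)^{n+1}$ after $n{+}1$ folded elements) is in fact cleaner than the paper's.
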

\clearpage

\section{Resettable Monoid Transformation Proof}
\label{sec:proofs}
\begin{proof}[Proof of Theorem \ref{thm:reset}]
    First, let us compute all possible pairs of inputs, as we will use them to simplify the rest of the proof.
    \begin{align}
        & (A, 0) \circ (A', 0) = (A \cdot (1 - 0) + H_I \cdot 0 \bullet A', 0 \lor 0) = (A \bullet A', 0)\\
        & (A, 1) \circ (A', 0) = (A \cdot (1 - 0) + H_I \cdot 0 \bullet A', 1 \lor 0) = (A \bullet A', 1)\\
        & (A, 0) \circ (A', 1) = (A \cdot (1 - 1) + H_I \cdot 1 \bullet A', 0 \lor 1) = (H_I \bullet A', 1)\\
    \end{align}
    Now, we must demonstrate that associativity holds $((A, b) \bullet (A', b')) \bullet (A'', b'') = (A, b') \bullet ((A', b') \bullet (A'', b''))$ for all possibilities of $A, A', A''$ and $b, b', b''$. That is, we must ensure that the episode boundaries are correctly handled for all possibilities -- that information does not leak across episode boundaries and that prior information otherwise propagates forward in time.
    %Now, we must demonstrate that for each of the above cases, that for both possibilities of $b'' = 0, b'' = 1$, that we obtain the correct output -- that information does not leak across episode boundaries and is otherwise preserved. For $b'' = 0$, we have
    \begin{align}
        (A \bullet A', 0) \circ (A'', 0) &&= ((A \bullet A') \cdot (1 - 0) + H_I \cdot 0 \bullet A'', 0 \lor 0) &&= (A \bullet A' \bullet A'', 0)\\
        (A \bullet A', 1) \circ (A'', 0) &&= ((A \bullet A') \cdot (1 - 0) + H_I \cdot 0 \bullet A'', 1 \lor 0) &&= (A \bullet A' \bullet A'', 1)\\
        (H_I \bullet A', 1) \circ (A'', 0) &&= ((H_I \bullet A') \cdot (1 - 0) + H_I \cdot 0 \bullet A'', 1 \lor 0) &&= (H_I \bullet A'  \bullet A'', 1).
    \end{align}
    And for $b'' = 1$, we have
    \begin{align}
        (A \bullet A', 0) \circ (A'', 1) &&= ((A \bullet A') \cdot (1 - 1) + H_I \cdot 1 \bullet A'', 0 \lor 1) &&= (A \bullet A' \bullet A'', 1)\\
        (A \bullet A', 1) \circ (A'', 1) &&= ((A \bullet A') \cdot (1 - 1) + H_I \cdot 1 \bullet A'', 1 \lor 1) &&= (H_I \bullet A'', 1)\\
        (H_I \bullet A', 1) \circ (A'', 1) &&= ((H_I \bullet A') \cdot (1 - 1) + H_I \cdot 1 \bullet A'', 1 \lor 1) &&= (H_I \bullet A'', 1).
    \end{align}
    We see that resets correctly remove the impact of any terms that occur before $b' = 1$, while correctly propagating state when $b' = 0$.
\end{proof}
\clearpage

\section{Rewriting Sequence Models as memoroids}
\label{sec:rewrite}
In this section, we reformulate existing models used in our experiments as memoroids. This reformulation is necessary to use inline resets for these models. 

\subsection{Linear Transformer}
The Linear Transformer from \cite{katharopoulos_transformers_2020} written as
\begin{align}
    X_0 &= 0 \in \mathbb{R}^{j \times k} \\
    x_0 &= 0 \in \mathbb{R}^{j} \\
    X_n &= X_{n-1} + \phi(W_k o_n) (W_v o_n)^\top \\
    x_n &= x_{n-1} + \phi(W_k o_n) \\
    s_n &= \mathrm{MLP}\left(\frac{X_n \, \phi(W_q o_n) }{x_n^\top  \phi(W_q o_n) } + o_n\right).
\end{align}
can be reformulated as the following memoroid
\begin{align}
    H &= \{(X,x) \mid X \in \mathbb{R}^{j \times k}, x \in \mathbb{R}^{j} \}\\
    e_I &= (0, 0)\\
    (X, x) \bullet (X', x') &= (X + X', x + x')\\
    f(o, b) &= (\phi(W_k o) (W_v o)^\top, \phi(W_k o))\\
    g((X, x), (o, b)) &= \mathrm{MLP}\left(\frac{X \, \phi(W_q o) }{x^\top  \phi(W_q o)} + o \right),
\end{align}
where $\phi(x) = 1 + \mathrm{ELU}(x)$.

\subsection{Simplified State Space Models} 
Prior work \citep{lu_structured_2024} defines an associate scan operator for the S5 variant of State Space Models. Little work is required to rewrite this in memoroid form:
\begin{align}
        H &= \{ (X, x) \mid X \in \mathbb{C}^{m \times m}, x \in \mathbb{R}^{m \times 1} \}\\
        H_I &= (I_{m}, 0)\\
        (X, x) \bullet (X', x') &= (X' X, X' x + x')\\
        f(o, b) &= (W_X, W_x o)\\
        g((X, x), (o, b)) &= (W_1 \, \textrm{GeLU}(W_c x) + b_1)
        \odot \textrm{sigmoid}(W_2 \, \textrm{GeLU}(W_c x) + b_2)
\end{align}
where $W_X, W_x, W_c$ are learnable weights, $b_1, b_2$ are learnable biases, and $I_m$ is the square identity matrix of size $m$. 

\subsection{Linear Recurrent Unit} 
The Linear Recurrent Unit \citep{orvieto_resurrecting_2023} could be roughly described as a theoretical simplification of S5, bringing it closer to classical RNNs. Writing it out as a memoroid, we see that it nearly identical to S5, however the weight initialization is different 
\begin{align}
        H &= \{ (X, x) \mid X \in \mathbb{C}^{m \times m}, x \in \mathbb{C}^{m \times 1} \}\\
        H_I &= (I_{m}, 0)\\
        (X, x) \bullet (X', x') &= (X' x, X' x + x')\\
        f(o, b) &= (W_X, W_x o)\\
        g((X, x), (o, b)) &= \textrm{MLP}(a)
\end{align}

\subsection{Fast and Forgetful Memory}
Finally, we can rewrite Fast and Forgetful Memory (FFM) as a memoroid, with the parallel scans simplifying its implementation and fixing numerical instabilities caused by large positive exponentials over long sequences, as discussed in \cite{morad_reinforcement_2023}. The original formulation is written as an aggregator and cell. First, let us write down the $\Gamma$ term used in the aggregator.
\begin{align}
    \Gamma(t) &= \exp{(-t |\alpha|)} \exp{(-t i \omega)}^\top \\
    &= \begin{bmatrix}
        \exp{-t (|\alpha_1| + i \omega_1) } & \dots & \exp{-t (|\alpha_1| + i \omega_c) } \\
        \vdots & \ddots & \\        
        \exp{-t (|\alpha_m| + i \omega_1) } & \dots & \exp{-t (
|\alpha_m| + i \omega_c)}
    \end{bmatrix}
\end{align}
We then write the aggregator as
\begin{align}
    S_{k : n} &= \begin{bmatrix}
        \Gamma(1)\\
        \vdots\\
        \Gamma(t + 1)
    \end{bmatrix} \odot \begin{bmatrix}
    S_{k-1}\\
    \vdots\\
    S_{k-1}
    \end{bmatrix} + \begin{bmatrix}
        \Gamma(-t)\\
        \vdots\\
        \Gamma(0)
    \end{bmatrix} \odot \begin{bmatrix}
        \left(  \sum_{j=0}^0 \Gamma(t - j) \odot  \left( o_{k+j}  1_{c}^{\top} \right) \right) \\
        \vdots\\
        \left( \sum_{j=0}^t \Gamma(t - j) \odot  \left( o_{k+j} 1_{c}^{\top}  \right) \right)
    \end{bmatrix}.  \label{eq:closed_mat}
\end{align}
where $\odot$ is the Hadamard product (or power), $m$ is the trace size, $c$ is the context size, and $\alpha \in \mathbb{R}_+^m, \omega \in \mathbb{R}^c$ are trainable parameters representing decay and context respectively. Multiplying column a vector by $1^\top_{c}$ ``broadcasts'' or repeats the column vector $c$ times. The cell is defined as
\begin{align}
    \Tilde{x}_{k:n} &= \ell_1(o_{k:n}) \odot \sigma(\ell_2(x_{k:n})) \\
    S_{k:n} &= \mathrm{Agg}(\Tilde{x}_{k:n}, S_{k-1})\\
    z_{k:n} &= \ell_3(\mathrm{Flatten}(\Re{[S_{k:n}]} \mid\mid \Im{[S_{k:n}]})) \\
    y_{k:n} &= \mathrm{LN}(z_{k:n}) \odot \sigma(\ell_4( o_{k:n})) + \ell_5(o_{k:n}) \odot (1 - \sigma(\ell_4( o_{k:n})).
    \label{eq:output_gate}
\end{align}
$\mathrm{Agg}$ represents the aggregator (\cref{eq:closed_mat}) and $\ell$ represents linear layers with mappings $\ell_1, \ell_2: \mathbb{R}^{d} \rightarrow \mathbb{R}^m$, $\ell_3: \mathbb{R}^{m \times 2c} \rightarrow \mathbb{R}^{d}$, $\ell_4, \ell_5: \mathbb{R}^{d} \rightarrow \mathbb{R}^{d}$. $\Re, \Im$ extract the real and imaginary components of a complex number as reals, $\mathrm{Flatten}$ reshapes a matrix ($m \times c \to mc$) and $\mid\mid$ is the concatenation operator. $\mathrm{LN}$ is nonparametric layer norm, and $\sigma$ is sigmoid activation. We reformulate and simplify $\Gamma$, the FFM aggregator, and the cell as a single memoroid
\begin{align}
H &= \{ (X, t) \mid X \in \mathbb{C}^{m \times c}, t \in \mathbb{Z} \}\\
H_I &= (0, 0)\\
(X, t) \bullet (X', t') &= (X \odot \exp{\left( t'(-|\alpha| \oplus i \omega) \right)} + X', t + t')\\
    f(o, b) &=  \left( \begin{bmatrix}
        (W_1 o + b_1) \odot \sigma(W_2 o + b_2)\\
        \vdots\\
        (W_1 o + b_1) \odot \sigma(W_2 o + b_2)
    \end{bmatrix}^\top, 1 \right) \\
    g((X, t), (o, b)) &=
    \mathrm{MLP}(\mathrm{LN}(W_3 \left[\Re(X) \mid\mid \Im(X)) \right] + b_3)) \odot \sigma(W_4 o + b_4) + (1 - \sigma(W_4 o + b_4)) \odot o.
\end{align}
where $W, b$ are learnable weights and biases, $\Re, \Im$ extract the real and imaginary part of a complex number, $\odot$ is the elementwise product, $\oplus$ is an outer sum, and $\alpha \in \mathcal{R}^n, \omega \in \mathcal{R}^m$ are learnable parameters. Note that the original FFM formulation requires distributing $\Gamma(-t) = \exp{t (|\alpha| + i \omega)}$ into the sum. Since $\alpha$ is learned, the real component can grow very large and cause numerical instabilities as it overflows even a double precision float. This is discussed in the limitations section of the original paper. Since our formulation utilizes a Blelloch scan, we can do away with the negative exponent, removing the numerical instability. We note that unlike the other memory models we implemented, FFM is a time-varying recurrence because the recurrent updates depends on $t$.
\clearpage

\clearpage

\section{Experiment Setup}
\label{sec:experiment_setup}
The code necessary to reproduce all of our experiments is available at \url{https://github.com/proroklab/memory-monoids}. We used the same model hyperparameters across all experiments. Training hyperparameters, such as number of epochs, varied across tasks. To find hyperparameters, we simply ran many experiments using SBB, the approach used in prior literature. Once we arrived at a good set of hyperparameters, we simply reused them for our TBB method.

\subsection{Compute Used}
We ran out of GPU credits early in the paper. We estimate roughly 70\% of experiments were run on CPU only, across a number of hardware configurations. Thus, it is not straightforward to arrive at a single number. Users should be able to run at least one seed for each experiment we did, on a reasonable laptop, over approximately one week.

\subsection{Model Setup}
We construct our model using blocks. A block contains a linear layer with nonparametric layer normalization and leaky ReLU activation. Observations feed into a block, followed by a memory model, followed by two more blocks. The hidden size of all blocks is 256 dimensions. For the S5 and LRU models, stacked two S5 and LRU layers, resulting in a sum of 512 dimensions of recurrent state (256 per layer). The Linear Transformer and Fast and Forgetful Memory models use just a single layer with 256 dimensions of recurrent state. We use the ADAM optimizer without weight decay.

\subsection{Task Setup}
For each task, we selected a replay buffer large enough such that no old observations ever needed to be discarded. Epochs Rand, Train describes the number of episodes we collect randomly, and then the number of training epochs. Polyak $\tau$ determines the target network update rate. Batch Size measures the batch size in transitions for each model update. LR is the learning rate with a linear warmup over a specified number of model updates. The ratio describes the number of episodes collected at each epoch, compared to the number of model updates per epoch. $1:2$ means we would perform 2 gradient updates for each 1 episode collected. $\nabla$ Clip corresponds to gradient clipping, where the gradient magnitude is rescaled to at most $\nabla$ Clip. $\gamma$ is the decay term used in MDPs. We use a linear learning rate warmup of 200 updates for all tasks.

\vspace{1em}
{\footnotesize \begin{tabular}{lrrrrrrrr}
    Task & Epochs Rand, Train & Polyak $\tau$ & Batch Size & LR & Ratio & $\nabla$ Clip & $\gamma$ \\
    \hline
    RepeatFirst & 5,000, 5,000 & 0.995 & 1,000 & 0.0001 & 1:1 & 0.01 & 0.99 \\
    RepeatPrevious & 5,000, 5,000 & 0.995 & 1,000 & 0.0001 & 1:1 &0.01 & 0.5 \\
    CountRecall & 10,000, 10,000 & 0.995 & 1,000 & 0.0001 & 1:1 & 0.01 & 0.99 \\
    PosOnlyCartPole & 10,000, 10,000 & 0.995 & 1,000 & 0.0001 & 1:1 & 0.01 & 0.99 \\
    VelOnlyCartPole & 10,000, 10,000 & 0.995 & 1,000 & 0.0001 & 1:1 & 0.01 & 0.99 \\
    NoisyCartPole & 10,000, 10,000 & 0.995 & 1,000 & 0.0001 & 1:1 & 0.01 & 0.99 \\
    AutoEncode & 10,000, 10,000 & 0.995 & 1,000 & 0.0001 & 1:4 & 0.01 & 0.99 \\
    MultiarmedBandit & 10,000, 10,000 & 0.995 & 1,000 & 0.0001 & 1:1 & 0.01 & 0.8 \\
    MineSweeper & 10,000, 40,000 & 0.9975 & 1,000 & 0.0001 & 1:1 & 0.01 & 0.99
\end{tabular}}

\subsection{Wall-Clock Experiment Details}
In the \cref{fig:return_runtime} plot, we test the wall-clock efficiency of our discounted return monoid against the standard approach of iterating over episodes in a batch. Both the monoid and standard approach are just-in-time compiled on a GPU, however the standard approach requires a for loop when the episode lengths are not fixed. We sample a batch of episodes, where each episode length is sampled from a discrete uniform distribution between one and a maximum episode length. We find that our memoroid computes the discounted return between three orders of magnitude faster.

Next, we compare TBB and SBB scaling. TBB scales worse than SBB ($O(\log B)$ and $O(\log L)$ respectively, where $B$ is the batch size and $L$ is segment length). We question how this overhead translates to wall-clock training time. In the \cref{fig:return_runtime} table, we examine the total time spent training, finding that the time difference is negligible. The memory model forward pass is only a fraction of the time spent at each epoch, with environment sampling, replay buffer sampling (and in the case of SBB, splitting, truncating, and padding sequences) all taking a nontrivial amount of time.

\subsection{Atari Experiment Details}
We describe the model and training configuration for the Atari experiments below. We use a CNN similar to that of \cite{mnih_human-level_2015}, with filter sizes 8, 4, 3 and filter channels 32, 64, 64, and layernorm. The CNN is followed by the recurrent model with recurrent states of size 512, and a two-layer MLP of width 512. We collect one episode per training epoch, and perform 5 gradient updates per epoch. We use a batch size of 16,000 transitions for each update, and evaluate our policy every 100 epochs.

\clearpage
\section{Non-Recurrent Q Learning}
\label{sec:state_q}
\begin{algorithm}[h]
\small
\caption{Non-recurrent Q learning update}
\begin{algorithmic}
\State {\bfseries Input:} params $\theta$, target params $\phi$, Q function $Q$, train batch $\mathcal{B}$, discount $\gamma$

\State $\hat{y}_j = r_j + \max_{a \in A} \gamma Q_{\phi}(s'_j, a), \quad \forall \mathcal{B}[j]$ \hfill \Comment{Q Target}
\State $ \theta \gets \min_\theta \lVert Q_\phi(s_j, a_j) - \hat{y}_j \rVert, \quad \forall \mathcal{B}[j]$ \hfill \Comment{Q update}
\State $\phi \gets \phi \beta + (1 - \beta) \theta $ \hfill \Comment{Target update}
\end{algorithmic}
\end{algorithm}

\section{A Primer on Scans}
\label{sec:scans}
In this section, we briefly review scans and associative scans. Generally speaking, we express classical RNNs using scans, and linear recurrent models using associative scans which tend to be more efficient.

\subsection{Scans}
A \emph{scan} is an operation over a sequence of elements, often used in tensor processing. We write scans as some function $\bullet$ defined over elements $x_1, x_2, \dots, x_n$
\begin{equation}
    h_n = x_1 \bullet x_2, \dots \bullet x_n
\end{equation}
In deep learning, we often formulate an RNN using a scan. Consider, for example, the following simple recurrent network
\begin{align}
    h_n &= \sigma(W_h h_{n-1} + W_x x_n) \\
    y_n &= W_y h_n
\end{align}
where $\sigma$ represents some nonlinearity, and the $W$ terms are learned parameters. We can define $\bullet$ as 
\begin{equation}
    h_{n-1} \bullet x_n = \sigma(W_h h_{n-1} + W_x x_n).
\end{equation}
Thus, we can execute a scan $h_0 \bullet x_1 \bullet x_2 \dots \bullet x_n$ to compute the recurrent state $h_n$ and output $y_n$. Note that in doing so, we must also compute all intermediate recurrent states $h_1, \dots, h_n$. This is due to the dependence of $h_n$ on $h_{n-1}$. Let us write out the formula for $h_3$ to demonstrate this dependence
\begin{align}
    h_1 &= \sigma(W_h h_{0} + W_x x_{1}) \\
    h_2 &= \sigma(W_h \sigma(W_h h_{0} + W_x x_{1}) + W_x x_{2})\\
    h_3 &= \sigma(W_h \sigma(W_h \sigma(W_h h_{0} + W_x x_{1}) + W_x x_{2}) + W_x x_{3}).
\end{align}
Due to their sequential nature, standard scans tend to be slow on a GPU, since all computations must be executed in sequence.

\subsection{Associative Scans}
Certain operators $\bullet$ may exhibit the associative property
\begin{align}
    (x_1 \bullet x_2) \bullet x_3 = x_1 \bullet (x_2 \bullet x_3).
\end{align}
When this is the case, we may use \emph{associative scans} instead of scans. Associative scans (alternatively called parallel scans) are generally much faster to execute on a GPU than standard scans. While a scan is $O(n)$ parallel time complexity, a work-efficient parallel scan is $O(\log n)$.\\

The key idea behind parallel scans is that if the operator $\bullet$ is associative, there is no explicit dependency that requires we execute $\bullet$ in series. Rather, we can we can parallelize computation.

Consider the following expression
\begin{align}
    x_1 \bullet x_2 \bullet x_3 \bullet x_4.
\end{align}
If $\bullet$ exhibits the associative property, then we can compute the expression as
\begin{align}
    (x_1 \bullet x_2) \bullet (x_3 \bullet x_4).
\end{align}
That is, we can compute the first term $z_2 = (x_1 \bullet x_2)$ independently of $z_4 = (x_3 \bullet x_4)$. Then, we can compute the resulting operator $h_4 = z_2 \bullet z_4$.\\

This is a naiive associative scan -- it executes the binary operator $O(n \log_2 n)$ times. The Blelloch Scan \citep{blelloch_prex_1990} produces equivalent outputs to the naiive associative scan, but does so in $O(n)$ calls to $\bullet$. The Blelloch Scan algorithm is relative complex to implement or explain, but fortunately it exists in the CUDA and JAX libraries.

\clearpage

\section*{NeurIPS Paper Checklist}
\begin{enumerate}

\item {\bf Claims}
    \item[] Question: Do the main claims made in the abstract and introduction accurately reflect the paper's contributions and scope?
    \item[] Answer: \answerYes{} % Replace by \answerYes{}, \answerNo{}, or \answerNA{}.
    \item[] Justification: We highlight the claims in the abstract and contribution paragraph, which we provide evidence for throughout the paper.
    \item[] Guidelines:
    \begin{itemize}
        \item The answer NA means that the abstract and introduction do not include the claims made in the paper.
        \item The abstract and/or introduction should clearly state the claims made, including the contributions made in the paper and important assumptions and limitations. A No or NA answer to this question will not be perceived well by the reviewers. 
        \item The claims made should match theoretical and experimental results, and reflect how much the results can be expected to generalize to other settings. 
        \item It is fine to include aspirational goals as motivation as long as it is clear that these goals are not attained by the paper. 
    \end{itemize}

\item {\bf Limitations}
    \item[] Question: Does the paper discuss the limitations of the work performed by the authors?
    \item[] Answer: \answerYes{} % Replace by \answerYes{}, \answerNo{}, or \answerNA{}.
    \item[] Justification: We have a section on limitations and future work.
    \item[] Guidelines:
    \begin{itemize}
        \item The answer NA means that the paper has no limitation while the answer No means that the paper has limitations, but those are not discussed in the paper. 
        \item The authors are encouraged to create a separate "Limitations" section in their paper.
        \item The paper should point out any strong assumptions and how robust the results are to violations of these assumptions (e.g., independence assumptions, noiseless settings, model well-specification, asymptotic approximations only holding locally). The authors should reflect on how these assumptions might be violated in practice and what the implications would be.
        \item The authors should reflect on the scope of the claims made, e.g., if the approach was only tested on a few datasets or with a few runs. In general, empirical results often depend on implicit assumptions, which should be articulated.
        \item The authors should reflect on the factors that influence the performance of the approach. For example, a facial recognition algorithm may perform poorly when image resolution is low or images are taken in low lighting. Or a speech-to-text system might not be used reliably to provide closed captions for online lectures because it fails to handle technical jargon.
        \item The authors should discuss the computational efficiency of the proposed algorithms and how they scale with dataset size.
        \item If applicable, the authors should discuss possible limitations of their approach to address problems of privacy and fairness.
        \item While the authors might fear that complete honesty about limitations might be used by reviewers as grounds for rejection, a worse outcome might be that reviewers discover limitations that aren't acknowledged in the paper. The authors should use their best judgment and recognize that individual actions in favor of transparency play an important role in developing norms that preserve the integrity of the community. Reviewers will be specifically instructed to not penalize honesty concerning limitations.
    \end{itemize}

\item {\bf Theory Assumptions and Proofs}
    \item[] Question: For each theoretical result, does the paper provide the full set of assumptions and a complete (and correct) proof?
    \item[] Answer: \answerYes{} % Replace by \answerYes{}, \answerNo{}, or \answerNA{}.
    \item[] Justification: We provide proofs for the reset mechanisms and the discounted return and generalized advantage estimate memoroids in the appendix.
    \item[] Guidelines:
    \begin{itemize}
        \item The answer NA means that the paper does not include theoretical results. 
        \item All the theorems, formulas, and proofs in the paper should be numbered and cross-referenced.
        \item All assumptions should be clearly stated or referenced in the statement of any theorems.
        \item The proofs can either appear in the main paper or the supplemental material, but if they appear in the supplemental material, the authors are encouraged to provide a short proof sketch to provide intuition. 
        \item Inversely, any informal proof provided in the core of the paper should be complemented by formal proofs provided in appendix or supplemental material.
        \item Theorems and Lemmas that the proof relies upon should be properly referenced. 
    \end{itemize}

    \item {\bf Experimental Result Reproducibility}
    \item[] Question: Does the paper fully disclose all the information needed to reproduce the main experimental results of the paper to the extent that it affects the main claims and/or conclusions of the paper (regardless of whether the code and data are provided or not)?
    \item[] Answer: \answerYes{} % Replace by \answerYes{}, \answerNo{}, or \answerNA{}.
    \item[] Justification: We provide all hyperparameters in the appendix. We also include all source code with yaml files denoting the configuration of each experiment we ran.
    \item[] Guidelines:
    \begin{itemize}
        \item The answer NA means that the paper does not include experiments.
        \item If the paper includes experiments, a No answer to this question will not be perceived well by the reviewers: Making the paper reproducible is important, regardless of whether the code and data are provided or not.
        \item If the contribution is a dataset and/or model, the authors should describe the steps taken to make their results reproducible or verifiable. 
        \item Depending on the contribution, reproducibility can be accomplished in various ways. For example, if the contribution is a novel architecture, describing the architecture fully might suffice, or if the contribution is a specific model and empirical evaluation, it may be necessary to either make it possible for others to replicate the model with the same dataset, or provide access to the model. In general. releasing code and data is often one good way to accomplish this, but reproducibility can also be provided via detailed instructions for how to replicate the results, access to a hosted model (e.g., in the case of a large language model), releasing of a model checkpoint, or other means that are appropriate to the research performed.
        \item While NeurIPS does not require releasing code, the conference does require all submissions to provide some reasonable avenue for reproducibility, which may depend on the nature of the contribution. For example
        \begin{enumerate}
            \item If the contribution is primarily a new algorithm, the paper should make it clear how to reproduce that algorithm.
            \item If the contribution is primarily a new model architecture, the paper should describe the architecture clearly and fully.
            \item If the contribution is a new model (e.g., a large language model), then there should either be a way to access this model for reproducing the results or a way to reproduce the model (e.g., with an open-source dataset or instructions for how to construct the dataset).
            \item We recognize that reproducibility may be tricky in some cases, in which case authors are welcome to describe the particular way they provide for reproducibility. In the case of closed-source models, it may be that access to the model is limited in some way (e.g., to registered users), but it should be possible for other researchers to have some path to reproducing or verifying the results.
        \end{enumerate}
    \end{itemize}

\item {\bf Open access to data and code}
    \item[] Question: Does the paper provide open access to the data and code, with sufficient instructions to faithfully reproduce the main experimental results, as described in supplemental material?
    \item[] Answer: \answerYes{} % Replace by \answerYes{}, \answerNo{}, or \answerNA{}.
    \item[] Justification: We include the code used to run all experiments in this paper. The supplementary material upload has a readme describing the commands to install the library and run the experiments.
    \item[] Guidelines:
    \begin{itemize}
        \item The answer NA means that paper does not include experiments requiring code. Simply run `python tape\_dqn.py PATH\_TO\_CONFIG' to run the tape-based RL experiments.
        \item Please see the NeurIPS code and data submission guidelines (\url{https://nips.cc/public/guides/CodeSubmissionPolicy}) for more details.
        \item While we encourage the release of code and data, we understand that this might not be possible, so “No” is an acceptable answer. Papers cannot be rejected simply for not including code, unless this is central to the contribution (e.g., for a new open-source benchmark).
        \item The instructions should contain the exact command and environment needed to run to reproduce the results. See the NeurIPS code and data submission guidelines (\url{https://nips.cc/public/guides/CodeSubmissionPolicy}) for more details.
        \item The authors should provide instructions on data access and preparation, including how to access the raw data, preprocessed data, intermediate data, and generated data, etc.
        \item The authors should provide scripts to reproduce all experimental results for the new proposed method and baselines. If only a subset of experiments are reproducible, they should state which ones are omitted from the script and why.
        \item At submission time, to preserve anonymity, the authors should release anonymized versions (if applicable).
        \item Providing as much information as possible in supplemental material (appended to the paper) is recommended, but including URLs to data and code is permitted.
    \end{itemize}

\item {\bf Experimental Setting/Details}
    \item[] Question: Does the paper specify all the training and test details (e.g., data splits, hyperparameters, how they were chosen, type of optimizer, etc.) necessary to understand the results?
    \item[] Answer: \answerYes{} % Replace by \answerYes{}, \answerNo{}, or \answerNA{}.
    \item[] Justification: We describe mostly everything, such as all hyperparameters and how we chose them, in the Appendix. Anything missing should be readily available in the code.
    \item[] Guidelines:
    \begin{itemize}
        \item The answer NA means that the paper does not include experiments.
        \item The experimental setting should be presented in the core of the paper to a level of detail that is necessary to appreciate the results and make sense of them.
        \item The full details can be provided either with the code, in appendix, or as supplemental material.
    \end{itemize}

\item {\bf Experiment Statistical Significance}
    \item[] Question: Does the paper report error bars suitably and correctly defined or other appropriate information about the statistical significance of the experiments?
    \item[] Answer: \answerYes{} % Replace by \answerYes{}, \answerNo{}, or \answerNA{}.
    \item[] Justification: We report bootstrapped 95\% confidence interval for all experiments except for the wall-clock time, which uses standard deviation instead.
    \item[] Guidelines:
    \begin{itemize}
        \item The answer NA means that the paper does not include experiments.
        \item The authors should answer "Yes" if the results are accompanied by error bars, confidence intervals, or statistical significance tests, at least for the experiments that support the main claims of the paper.
        \item The factors of variability that the error bars are capturing should be clearly stated (for example, train/test split, initialization, random drawing of some parameter, or overall run with given experimental conditions).
        \item The method for calculating the error bars should be explained (closed form formula, call to a library function, bootstrap, etc.)
        \item The assumptions made should be given (e.g., Normally distributed errors).
        \item It should be clear whether the error bar is the standard deviation or the standard error of the mean.
        \item It is OK to report 1-sigma error bars, but one should state it. The authors should preferably report a 2-sigma error bar than state that they have a 96\% CI, if the hypothesis of Normality of errors is not verified.
        \item For asymmetric distributions, the authors should be careful not to show in tables or figures symmetric error bars that would yield results that are out of range (e.g. negative error rates).
        \item If error bars are reported in tables or plots, The authors should explain in the text how they were calculated and reference the corresponding figures or tables in the text.
    \end{itemize}

\item {\bf Experiments Compute Resources}
    \item[] Question: For each experiment, does the paper provide sufficient information on the computer resources (type of compute workers, memory, time of execution) needed to reproduce the experiments?
    \item[] Answer: \answerYes{} % Replace by \answerYes{}, \answerNo{}, or \answerNA{}.
    \item[] Justification: We detail this in the appendix. We do not have a hard number, but we explain roughly how much compute would be required.
    \item[] Guidelines:
    \begin{itemize}
        \item The answer NA means that the paper does not include experiments.
        \item The paper should indicate the type of compute workers CPU or GPU, internal cluster, or cloud provider, including relevant memory and storage.
        \item The paper should provide the amount of compute required for each of the individual experimental runs as well as estimate the total compute. 
        \item The paper should disclose whether the full research project required more compute than the experiments reported in the paper (e.g., preliminary or failed experiments that didn't make it into the paper). 
    \end{itemize}
    
\item {\bf Code Of Ethics}
    \item[] Question: Does the research conducted in the paper conform, in every respect, with the NeurIPS Code of Ethics \url{https://neurips.cc/public/EthicsGuidelines}?
    \item[] Answer: \answerYes{} % Replace by \answerYes{}, \answerNo{}, or \answerNA{}.
    \item[] Justification: Yes, we have adhered to ethics guidelines.
    \item[] Guidelines:
    \begin{itemize}
        \item The answer NA means that the authors have not reviewed the NeurIPS Code of Ethics.
        \item If the authors answer No, they should explain the special circumstances that require a deviation from the Code of Ethics.
        \item The authors should make sure to preserve anonymity (e.g., if there is a special consideration due to laws or regulations in their jurisdiction).
    \end{itemize}

\item {\bf Broader Impacts}
    \item[] Question: Does the paper discuss both potential positive societal impacts and negative societal impacts of the work performed?
    \item[] Answer: \answerNA{} % Replace by \answerYes{}, \answerNo{}, or \answerNA{}.
    \item[] Justification: This research focuses on improving efficiency in RL, as demonstrated on toy problems. It could potentially decrease energy usage in this manner, but we do not want to jump to any conclusions. Further research is required.
    \item[] Guidelines:
    \begin{itemize}
        \item The answer NA means that there is no societal impact of the work performed.
        \item If the authors answer NA or No, they should explain why their work has no societal impact or why the paper does not address societal impact.
        \item Examples of negative societal impacts include potential malicious or unintended uses (e.g., disinformation, generating fake profiles, surveillance), fairness considerations (e.g., deployment of technologies that could make decisions that unfairly impact specific groups), privacy considerations, and security considerations.
        \item The conference expects that many papers will be foundational research and not tied to particular applications, let alone deployments. However, if there is a direct path to any negative applications, the authors should point it out. For example, it is legitimate to point out that an improvement in the quality of generative models could be used to generate deepfakes for disinformation. On the other hand, it is not needed to point out that a generic algorithm for optimizing neural networks could enable people to train models that generate Deepfakes faster.
        \item The authors should consider possible harms that could arise when the technology is being used as intended and functioning correctly, harms that could arise when the technology is being used as intended but gives incorrect results, and harms following from (intentional or unintentional) misuse of the technology.
        \item If there are negative societal impacts, the authors could also discuss possible mitigation strategies (e.g., gated release of models, providing defenses in addition to attacks, mechanisms for monitoring misuse, mechanisms to monitor how a system learns from feedback over time, improving the efficiency and accessibility of ML).
    \end{itemize}
    
\item {\bf Safeguards}
    \item[] Question: Does the paper describe safeguards that have been put in place for responsible release of data or models that have a high risk for misuse (e.g., pretrained language models, image generators, or scraped datasets)?
    \item[] Answer: \answerNA{} % Replace by \answerYes{}, \answerNo{}, or \answerNA{}.
    \item[] Justification: The worst thing our models can do is learn to play MineSweeper.
    \item[] Guidelines:
    \begin{itemize}
        \item The answer NA means that the paper poses no such risks.
        \item Released models that have a high risk for misuse or dual-use should be released with necessary safeguards to allow for controlled use of the model, for example by requiring that users adhere to usage guidelines or restrictions to access the model or implementing safety filters. 
        \item Datasets that have been scraped from the Internet could pose safety risks. The authors should describe how they avoided releasing unsafe images.
        \item We recognize that providing effective safeguards is challenging, and many papers do not require this, but we encourage authors to take this into account and make a best faith effort.
    \end{itemize}

\item {\bf Licenses for existing assets}
    \item[] Question: Are the creators or original owners of assets (e.g., code, data, models), used in the paper, properly credited and are the license and terms of use explicitly mentioned and properly respected?
    \item[] Answer: \answerYes{} % Replace by \answerYes{}, \answerNo{}, or \answerNA{}.
    \item[] Justification: Yes, we credit the various model authors for their memory models and the POPGym authors for their dataset.
    \item[] Guidelines:
    \begin{itemize}
        \item The answer NA means that the paper does not use existing assets.
        \item The authors should cite the original paper that produced the code package or dataset.
        \item The authors should state which version of the asset is used and, if possible, include a URL.
        \item The name of the license (e.g., CC-BY 4.0) should be included for each asset.
        \item For scraped data from a particular source (e.g., website), the copyright and terms of service of that source should be provided.
        \item If assets are released, the license, copyright information, and terms of use in the package should be provided. For popular datasets, \url{paperswithcode.com/datasets} has curated licenses for some datasets. Their licensing guide can help determine the license of a dataset.
        \item For existing datasets that are re-packaged, both the original license and the license of the derived asset (if it has changed) should be provided.
        \item If this information is not available online, the authors are encouraged to reach out to the asset's creators.
    \end{itemize}

\item {\bf New Assets}
    \item[] Question: Are new assets introduced in the paper well documented and is the documentation provided alongside the assets?
    \item[] Answer: \answerNA{} % Replace by \answerYes{}, \answerNo{}, or \answerNA{}.
    \item[] Justification: We just include methods, not new assets.
    \item[] Guidelines:
    \begin{itemize}
        \item The answer NA means that the paper does not release new assets.
        \item Researchers should communicate the details of the dataset/code/model as part of their submissions via structured templates. This includes details about training, license, limitations, etc. 
        \item The paper should discuss whether and how consent was obtained from people whose asset is used.
        \item At submission time, remember to anonymize your assets (if applicable). You can either create an anonymized URL or include an anonymized zip file.
    \end{itemize}

\item {\bf Crowdsourcing and Research with Human Subjects}
    \item[] Question: For crowdsourcing experiments and research with human subjects, does the paper include the full text of instructions given to participants and screenshots, if applicable, as well as details about compensation (if any)? 
    \item[] Answer: \answerNA{} % Replace by \answerYes{}, \answerNo{}, or \answerNA{}.
    \item[] Justification: There were no human subjects.
    \item[] Guidelines:
    \begin{itemize}
        \item The answer NA means that the paper does not involve crowdsourcing nor research with human subjects.
        \item Including this information in the supplemental material is fine, but if the main contribution of the paper involves human subjects, then as much detail as possible should be included in the main paper. 
        \item According to the NeurIPS Code of Ethics, workers involved in data collection, curation, or other labor should be paid at least the minimum wage in the country of the data collector. 
    \end{itemize}

\item {\bf Institutional Review Board (IRB) Approvals or Equivalent for Research with Human Subjects}
    \item[] Question: Does the paper describe potential risks incurred by study participants, whether such risks were disclosed to the subjects, and whether Institutional Review Board (IRB) approvals (or an equivalent approval/review based on the requirements of your country or institution) were obtained?
    \item[] Answer: \answerNA{} % Replace by \answerYes{}, \answerNo{}, or \answerNA{}.
    \item[] Justification: There were no human subjects.
    \item[] Guidelines:
    \begin{itemize}
        \item The answer NA means that the paper does not involve crowdsourcing nor research with human subjects.
        \item Depending on the country in which research is conducted, IRB approval (or equivalent) may be required for any human subjects research. If you obtained IRB approval, you should clearly state this in the paper. 
        \item We recognize that the procedures for this may vary significantly between institutions and locations, and we expect authors to adhere to the NeurIPS Code of Ethics and the guidelines for their institution. 
        \item For initial submissions, do not include any information that would break anonymity (if applicable), such as the institution conducting the review.
    \end{itemize}

\end{enumerate}

\end{document}